\newcommand{\parahead}[1]{\vspace{1.5mm}\noindent\textbf{#1.}\ }
\DeclareMathOperator*{\argmin}{argmin}
\newcommand{\MethodName}{NKSR\xspace}
\newtheorem{theorem}{Theorem}
\newtheorem{lemma}[theorem]{Lemma}
\newtheorem{remark}[theorem]{Remark}
\theoremstyle{definition}
\newcommand{\RR}{\mathbb{R}}
\newcommand{\Xin}{\bm{X}_\text{in}}
\newcommand{\Oin}{\bm{O}_\text{in}}
\newcommand{\Nin}{\bm{N}_\text{in}}
\newcommand{\Xdense}{\bm{X}_\text{dense}}
\newcommand{\Odense}{\bm{O}_\text{dense}}
\newcommand{\Ndense}{\bm{N}_\text{dense}}
\newcommand{\Seps}{\bm{S}_\epsilon}
\newcommand{\Qmat}{\mathbf{Q}}
\newcommand{\Gmat}{\mathbf{G}}
\newcommand{\x}{\bm{x}}
\newcommand{\trans}{\mathbf{T}}
\newcommand{\Xgt}{\bm{X}_\text{gt}}
\newcommand{\xgt}{\bm{x}_\text{gt}}
\newcommand{\Ngt}{\bm{N}_\text{gt}}
\newcommand{\ngt}{\bm{n}_\text{gt}}
\newcommand{\Xpd}{\bm{X}_\text{pd}}
\newcommand{\xpd}{\bm{x}_\text{pd}}
\newcommand{\Npd}{\bm{N}_\text{pd}}
\newcommand{\npd}{\bm{n}_\text{pd}}
\DeclareRobustCommand{\ks}{%
\begingroup\normalfont
\includegraphics[height=1.5\fontcharht\font`\B]{}%
\endgroup}
\definecolor{darkblue}{RGB}{49,130,189}
\definecolor{stanfordgrey}{RGB}{46,45,41}
\definecolor{cardinalred}{RGB}{253,141,60}
\crefname{section}{\S}{\S\S}
\crefname{subsection}{\S}{\S\S}
\crefname{conj}{Conj.}{Conj.}
\Crefname{assumption}{\textbf{H}\hspace{-3pt}}{\textbf{H}\hspace{-3pt}}
\crefname{assumption}{\textbf{H}}{\textbf{H}}
\crefname{algorithm}{\text{Alg.}}{\text{Alg.}}
\crefname{assumption}{\textbf{H}}{\textbf{H}}
\crefname{equation}{\text{Eq}}{\text{Eq}}
\crefname{definition}{\text{Dfn.}}{\text{Dfn.}}
\crefname{lemma}{\text{Lemma}}{\text{Lemma}}
\crefname{dfn}{\text{Dfn.}}{\text{Dfn.}}
\crefname{thm}{\text{Thm.}}{\text{Thm.}}
\crefname{tab}{\text{Tab.}}{\text{Tab.}}
\crefname{fig}{\text{Fig.}}{\text{Fig.}}
\crefname{table}{\text{Tab.}}{\text{Tab.}}
\crefname{figure}{\text{Fig.}}{\text{Fig.}}
\begin{document}

\title{Neural Kernel Surface Reconstruction}

\author{\vspace{-1.5em}Jiahui Huang$^1$ \;\; Zan Gojcic$^1$ \;\; Matan Atzmon$^1$ \;\; Or Litany$^1$ \;\; Sanja Fidler$^{1,2,3}$ \;\; Francis Williams$^1$\\
\small\normalfont $^1$NVIDIA \quad $^2$University of Toronto \quad $^3$Vector Institute
}


\twocolumn[{%
\renewcommand\twocolumn[1][]{#1}%
\vspace{-7.5mm}
\maketitle
\begin{center}
    \vspace{-10mm}
    \small\emph{Project page:} \url{https://research.nvidia.com/labs/toronto-ai/NKSR/}
\end{center}
\begin{center}
    \vspace{-2mm}
    \includegraphics[width=\linewidth]{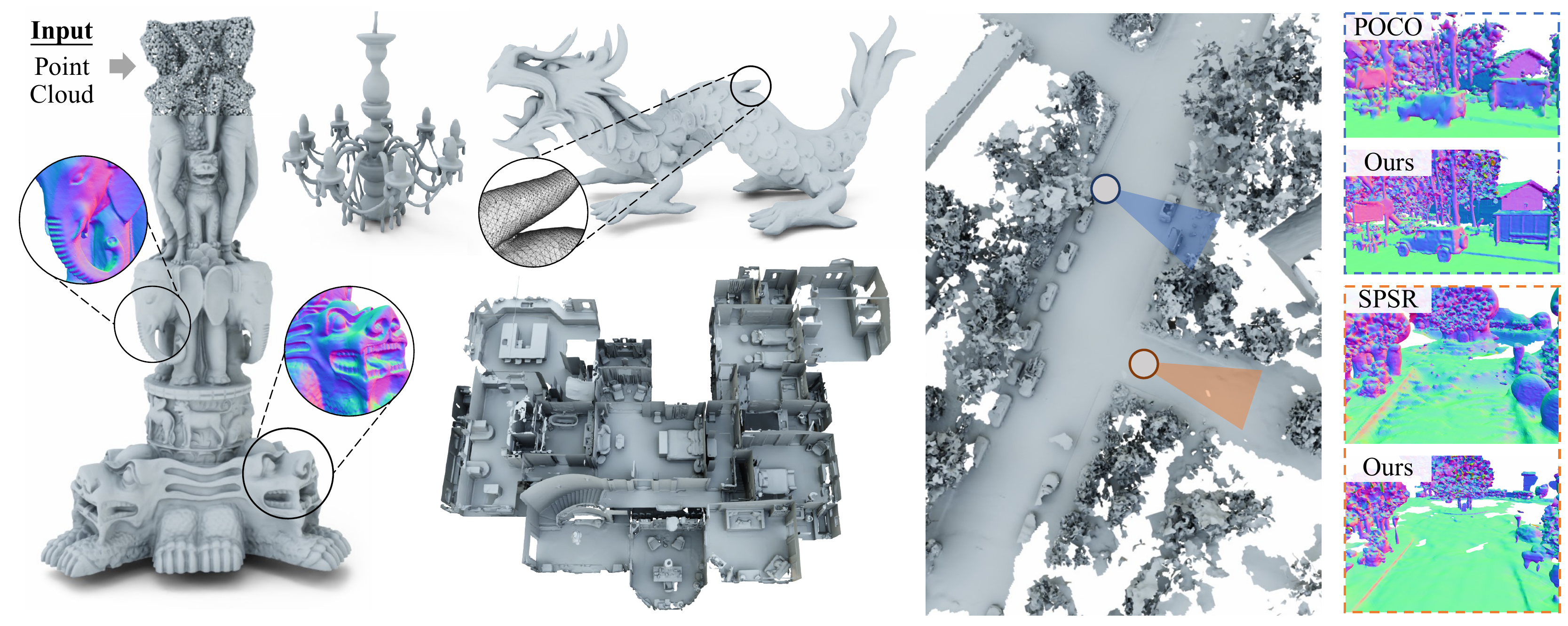}
    \vspace{-7.5mm}
    \captionof{figure}{We present Neural Kernel Surface Reconstruction (\textbf{\MethodName{}}) for recovering a 3D surface from an input point cloud. Trained directly from dense points, our method reaches state-of-the-art reconstruction quality and scalability. \MethodName{} is also highly generalizable: All the meshes in this figure are reconstructed using a single trained model.}
    \label{fig:teaser}
\end{center}%
}]

\begin{abstract}
\vspace{-1em}
We present a novel method for reconstructing a 3D implicit surface from a large-scale, sparse, and noisy point cloud. 
Our approach builds upon the recently introduced Neural Kernel Fields (NKF)~\cite{williams2022neural} representation. It enjoys similar generalization capabilities to NKF, while simultaneously addressing its main limitations: (a) We can scale to large scenes through compactly supported kernel functions, which enable the use of memory-efficient sparse linear solvers. (b) We are robust to noise, through a gradient fitting solve. (c) We minimize training requirements, enabling us to learn from any dataset of dense oriented points, and even mix training data consisting of objects and scenes at different scales.
Our method is capable of reconstructing millions of points in a few seconds, and handling very large scenes in an out-of-core fashion. We achieve state-of-the-art results on reconstruction benchmarks consisting of single objects (ShapeNet~\cite{chang2015shapenet}, ABC~\cite{Koch_2019_CVPR}), indoor scenes (ScanNet~\cite{dai2017scannet}, Matterport3D~\cite{Matterport3D}), and outdoor scenes (CARLA~\cite{Dosovitskiy17}, Waymo~\cite{Sun_2020_CVPR}).

\end{abstract}

\vspace{-5mm}
\section{Introduction}
\label{sec:introduction}

The goal of 3D reconstruction is to recover geometry from partial measurements of a shape. In this work, we aim to map a sparse set of oriented points sampled from the surface of a shape to a 3D implicit field.
This is a challenging inverse problem since point clouds acquired from real-world sensors are often very large (millions or billions of points), vary in sampling density, and are corrupted with sensor noise. Furthermore, since surfaces are continuous but points are discrete, there are many valid solutions which can explain a given input.
%
%
%
To address these issues, past approaches aim to recover surfaces 
that agree with the input points 
while satisfying some prior everywhere else in space. Classical methods use 
an explicit prior (e.g. smoothness), while more recent learning-based approaches promote a likely reconstruction under a data-driven prior. 

There are, however, key limitations to both types of techniques that inhibit their application in practical situations. 
%
Since classical methods are fast, scalable, and able to handle diverse inputs, they have become an industry standard (e.g. \cite{kazhdan2013screened, williams2020nsplines}). Yet, they
suffer from quality issues in the presence of high noise or sparse inputs,
often failing to reconstruct even simple geometry such as a ground plane (see the ground in \cref{fig:teaser}).
On the other hand, learning-based approaches were shown to handle large noise \cite{peng2021shape}, and sparse inputs \cite{pan2019deep, boulch2022poco}, yet they often
struggle to generalize to out-of-distribution shapes and sampling densities as was highlighted in \cite{williams2022neural}.
These generalization issues can be attributed to the fact that current learning-based methods struggle to exploit large and diverse amounts of data for training. One cause of this is that a single forward pass can take minutes for even moderately sized inputs (\textit{e.g.} \cite{boulch2022poco}), limiting training to collections consisting of small point clouds. Furthermore, many existing methods rely on a preprocessing step to extract supervision in the form of occupancy or signed distance function \cite{peng2020convoccnet, mescheder2019occnet, park2019deepsdf, boulch2022poco, williams2022neural}. 
In practice, this preprocessing step hinders the ability to easily use diverse datasets for training since most shape datasets (including synthetic ones such as the popular ShapeNet \cite{chang2015shapenet}) consist of non-watertight shapes, open surfaces, or contain ghost geometry from which extracting supervision is hard. 

Recently, \cite{williams2022neural} proposed Neural Kernel Fields (NKF), a new paradigm to address the problem of generalization in 3D 
reconstruction. NKF learns a data-dependent kernel, and predicts a continuous occupancy field as a linear combination of this kernel supported on the input points.  %
The key insights of NKF are that a kernel explicitly encodes inductive bias, and that solving a kernel linear interpolation problem at test time always produces solutions that adhere to the inputs. Thus, by training on diverse shapes, NKF can learn a good inductive bias for the general 3D reconstruction problem rather than for a specific dataset. While NKF achieves impressive generalization results, it suffers from two major limitations that restrict its practical application. First, since it uses a globally supported kernel, it requires solving a dense linear system and cannot reconstruct inputs with more than ten thousand input points. Second, it degrades poorly in the presence of noise due to its interpolation of exact positional occupancy constraints.

In this work, we build upon the excellent generalization capability of NKF and tackle its main limitations to achieve a \emph{practical} learning-based reconstruction method that is scalable, fast, and robust to noise. 
Like NKF, our work leverages the idea of a learned kernel for generalization, but we (1) develop a novel, gradient-based kernel formulation which is robust to noise, and (2) use an explicit voxel hierarchy structure and compactly supported kernels to make our interpolation problem sparse, multi-scale, and capable of handling large inputs while still producing high fidelity outputs. The result is a learning-based yet out-of-the-box reconstruction method that can be applied to point clouds in the wild.
In particular, it enjoys all of the following properties: 
\begin{itemize}[topsep=1pt,itemsep=0pt]
\item It can generalize to out-of-distribution inputs, producing high-fidelity reconstructions, even in the presence of sparsity and noise.
\item It can be trained on the union of diverse datasets while only requiring dense oriented points as supervision, unlocking a new level of training data scale.
\item It can reconstruct point clouds consisting of millions of points in seconds, and scale to extremely large inputs in an out-of-core fashion.
\end{itemize}
We illustrate other methods in the context of these points visually in \cref{fig:venn}.

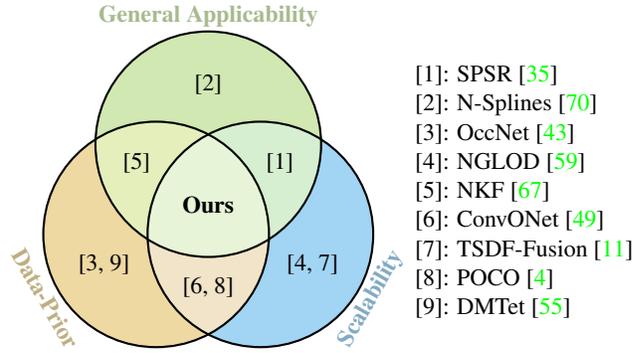
\begin{figure}[!t]
    \definecolor{color_1}{RGB}{209, 232, 178}
    \definecolor{color_2}{RGB}{239, 217, 163}
    \definecolor{color_3}{RGB}{162, 212, 244}

    \centering
    \begin{tikzpicture}
        \small
        \begin{scope}[blend group = soft light]
          \fill[color_1]   ( 90:0.8) circle (1.5);
          \fill[color_2] (210:0.8) circle (1.5);
          \fill[color_3]  (330:0.8) circle (1.5);
        \end{scope}
        \begin{scope}
            \draw[black, thick]   ( 90:0.8) circle (1.5);
            \draw[black, thick] (210:0.8) circle (1.5);
            \draw[black, thick]  (330:0.8) circle (1.5);
        \end{scope}
        \node [color=color_1!80!black,font=\bfseries] at ( 90:2.5)    {General Applicability};
        \node [color=color_2!80!black,font=\bfseries,rotate=-60] at ( 210:2.5)   {Data-Prior};
        \node [color=color_3!80!black,font=\bfseries,rotate=60] at ( 330:2.5)   {Scalability};
        \node [font=\bfseries] at (0:0)      {\small Ours};
        \node at ( 90:1.6)    {[2]};
        \node at ( 210:1.6)    {[3, 9]};
        \node at ( 330:1.6)    {[4, 7]};
        \node at ( 150:1.1)    {[5]};
        \node at ( 30:1.1)    {[1]};
        \node at ( 270:1.1)    {[6, 8]};

        \node [align=left] at (4.2,0) {%
            {[1]: SPSR~\cite{kazhdan2013screened}} \\
            {[2]: N-Splines~\cite{williams2020nsplines}} \\
            {[3]: OccNet~\cite{mescheder2019occnet}} \\
            {[4]: NGLOD~\cite{takikawa2021neural}} \\
            {[5]: NKF~\cite{williams2022neural}} \\
            {[6]: ConvONet~\cite{peng2020convoccnet}} \\
            {[7]: TSDF-Fusion~\cite{curless1996volumetric}} \\
            {[8]: POCO~\cite{boulch2022poco}}\\
            {[9]: DMTet~\cite{dmtet21}}\\
        };
    \end{tikzpicture}
    \vspace{-1.5em}
    \caption{\textbf{Comparison to related works.}}
    \label{fig:venn}
    \vspace{-1.5em}
\end{figure}

\begin{figure*}[!t]
\vspace{-2mm}
    \centering
    \includegraphics[width=1.0\textwidth]{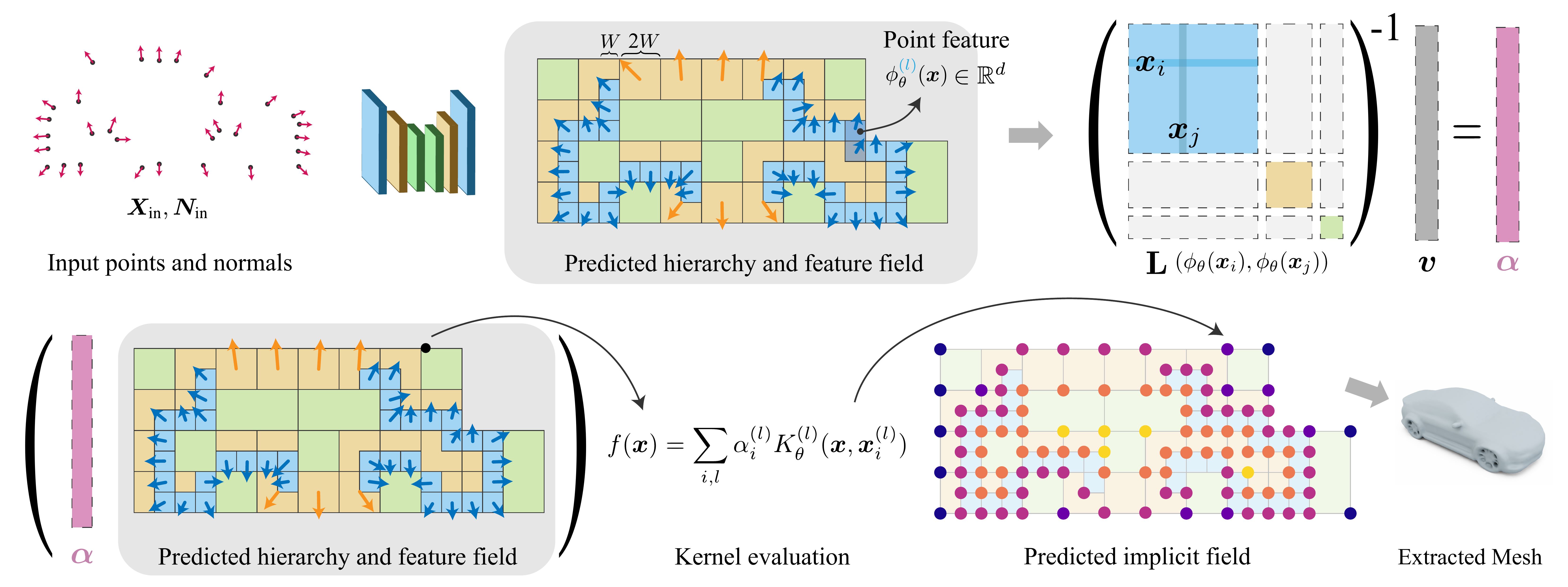}
    \vspace{-5.5mm}
    \caption{\textbf{Pipeline.} Our method accepts an oriented point cloud and predicts a sparse hierarchy of voxel grids containing features as well as normals in each voxel. We then construct a sparse linear system and solve for a set of per-voxel coefficients $\bm{\alpha}$. The linear system corresponds to the gram matrix arising from a kernel which depends on the predicted features, illustrated as $\mathbf{L}$ and $\bm{v}$ above but mathematically defined in \cref{eq:linsys}. To extract the predicted surface, we evaluate the function values at the voxel corners using a linear combination of the learned kernel basis functions, followed by dual marching cubes.}
    \label{fig:arch}
    \vspace{-2.5mm}
\end{figure*}

\section{Related Work}\label{sec:related}

We now give a brief overview of prior works that are relevant to our approach.
Learned kernels were investigated in~\cite{wilson2015deep, NIPS2007_4b6538a4, patacchiola2020bayesian,guizilini2018learning} for tasks such as few-shot transfer learning, classification of images, or robot mapping.
In the context of 3D reconstruction, Chu \etal\cite{chu2021unsupervised} encoded the inductive bias intrinsically in a 3D CNN structure without training data.
NKF~\cite{williams2022neural} proposed a novel \emph{data-dependent} kernel, which improved upon the non-learned kernel method derived from infinitely wide ReLU networks in Neural Splines~\cite{williams2020nsplines}. 
Comparably, we use a data-dependent kernel but restrict its spatial support to increase computational efficiency and use a gradient-based fitting formulation to increase noise robustness. 
Mapping 3D points to a feature grid via a convolutional architecture was proposed in ConvONet~\cite{peng2020convoccnet} and CIRCLE~\cite{chen2022circle} for predicting an occupancy field.
POCO~\cite{boulch2022poco} improved the quality and performance of ConvONet by using a transformer architecture instead of convolutions. 
Both methods, however, take a long time to reconstruct even a small scene.
Differently, our feature mapping is made efficient through a hierarchical sparse data structure. 
Non-dense data structures were studied in ASR~\cite{Ummenhofer_2021_ICCV} and DOGNN~\cite{wang2022dual} which proposed octree-based convolutional architectures for reconstructing large scenes.  
Generalization to novel scenes was addressed by LIG~\cite{genova2020local,huang2021di} using local patches which have smaller variability, but are very sensitive to the choice of patch size and relies on test-time optimization with unknown convergence properties. 
Similarly, our kernel weights are fitted to the scene at prediction but the fitting is done via a linear solver in a form of meta-learning~\cite{sitzmann2020metasdf}. 
Shape as Points~\cite{peng2021shape} learns to upsample the input points followed by differentiable Poisson reconstruction, and this idea is further extended by NGSolver~\cite{huang2022neuralgalerkin} to incorporate learnable basis functions.
However, the representation power of the surface is still bounded by the chosen family of basis functions where non-trivial integrations have to be applied.
Beyond methods based on implicit surfaces, other shape reconstruction techniques exist which leverage different output representations. 
These representations include dense point clouds \cite{rempe2020caspr, luo2021diffusion, zhou20213d, qi2017pointnet, qi2017pointnetpp, zhao20193d, sun2020canonical, yu2018ecnet, yu2018punet,fan2016point,lin2017learning,ma2023learning}, polygonal meshes \cite{hanocka2020point2mesh, chen2020bspnet, gao2020learning, Hanocka_2019, gkioxari2020mesh, Williams_2020_CVPR_Workshops, deng2020cvxnet,Litany_2018_CVPR,halimi2020towards,dmtet21,ma2023towards}, manifold atlases \cite{Williams_2019, deprelle2019learning, groueix2018atlasnet, gadelha2020deep, badki2020meshlet}, and voxel grids \cite{choy20163dr2n2,tulsiani2020objectcentric,hane2017hierarchical,marrnet,tulsiani2017multiview,girdhar2016learning}. 
While our method uses a neural field for reconstruction, past work has used neural fields to perform a variety of point cloud tasks such as shape compression~\cite{takikawa2021neural, williams2020nsplines}, 
voxel grid upsampling~\cite{peng2020convoccnet, mescheder2019occnet}, reconstruction from rotated inputs~\cite{deng2021vector,atzmon2022frame} and articulated poses~\cite{deng2020nasa,zhang2021strobenet}. 

\section{Method}
Our method predicts a 3D surface given a point cloud with normals. We encode this predicted surface as the zero level set of a Neural Kernel Field, \ie an implicit function represented as a weighted sum of learned, positive-definite basis functions which are conditioned on the input, and whose weights are computed using a linear optimization in the forward pass. These basis functions are supported on a sparse voxel hierarchy which we predict from the input point cloud using a sparse convolutional network, and depend on interpolated features at each voxel corner. 
In the following sections, we describe the key steps of our method, and summarize it pictorially in \cref{fig:arch}. We additionally provide rigorous derivations for each step in the Appendix.

\subsection{Predicting a 3D Shape from Points}
Given points and normals, the forward pass of our model predicts an implicit surface as a weighted sum of learned kernels in two steps: First, we feed the input to a sparse convolutional network that predicts a voxel hierarchy with features at each corner (\cref{fig:hierarchy_illustration}). These features define a collection of learned basis functions, which are centered at each voxel in the hierarchy. Second, we find a set of weights for these basis functions by solving a linear system that encourages the predicted implicit field to have a zero value near the input points, and to have gradients which agree with the input normals. Optionally, we can also predict a geometric mask, which outputs where in space to extract the final surface, trimming away spurious geometry.

\parahead{Predicting a Sparse Voxel Hierarchy} 
Given input points $\Xin = \{\bm x_j^\text{in} \in \RR^3 \}_{j=1}^{n_\text{in}}$, input normals $\Nin = \{\bm n_j^\text{in} \in \RR^3 \}_{j=1}^{n_\text{in}}$, and a voxel size $W$, we first predict a hierarchy of $L$ voxel grids using a convolutional backbone digesting the point cloud with concatenated normal $[\bm x_j^\text{in}, \bm n_j^\text{in}] \in \RR^6$ for each point. 
Each of the predicted voxel grid has $n^{(1)}, \ldots, n^{(L)}$ voxels with widths $W, 2W, \ldots 2^L W$ respectively and any voxel with center $\bm{x}_i^{(l-1)}$ at level $l-1$ is contained within some voxel with center $\bm{x}_j^{(l)}$ at level $l$.
The design of such a backbone network is inspired by \cite{wang2018adaptive} and is described in detail in the Appendix.
We additionally predict features $\bm z_{i}^{(l)} \in \RR^d$ and normals $\bm n_{i}^{(l)} \in \RR^3$ for each voxel in the hierarchy. 
The features $\bm z_i^{(l)}$ are employed to predict a feature field $\phi_\theta^{(l)}(\bm x | \Xin, \Nin)$ which lifts the coordinates $\bm x \in \RR^3$ to $d$-dimenensional vectors via Bézier interpolation followed by an MLP. 
\cref{fig:hierarchy_illustration} shows a 2D illustration of our predicted hierarchy and features. 

\parahead{Sparse Neural Kernel Field Hierarchy}
We encode our reconstructed shape as the zero level set of a 3D implicit field $f_\theta: \RR^3 \rightarrow \RR$ defined as a hierarchical \emph{Neural Kernel Field}, \ie, a weighted combination of \emph{positive definite kernels} which are conditioned on the inputs and centered at the midpoints $\bm x_i^{(l)} \in \RR^3$ of voxels in the predicted hierarchy:
\begin{align}\label{eq:kernel}
    f_\theta(\bm x | \Xin, \Nin) = \sum_{i, l} \alpha_i^{(l)} K^{(l)}_\theta(\bm x, \bm x_i^{(l)} | \Xin, \Nin).
\end{align}
Here, $\alpha_i^{(l)} \in \RR$ are scalar coefficients at the $i^\text{th}$ voxel at level $l = 1, \ldots L$ in the hierarchy, and $K^{(l)}_\theta$ is the predicted kernel for the $l^\text{th}$ level defined as 
\begin{equation}
\begin{aligned}
    K^{(l)}_\theta(\bm x, \bm x') = \langle &\phi_\theta^{(l)}(\bm x; \Xin, \Nin),\\ &\phi_\theta^{(l)}(\bm x'; \Xin, \Nin) \rangle \cdot K^{(l)}_\text{b}(\bm x, \bm x'),
\end{aligned} 
\end{equation}
where $\langle\cdot,\cdot\rangle$ is the dot product, $\phi_\theta^{(l)} : \RR^3 \rightarrow \RR^d$ is the feature field extracted from the hierarchy, and $K^{(l)}_b : \mathbb{R}^3 \times \mathbb{R}^3 \rightarrow \mathbb{R}$ is the \emph{Bézier Kernel}, which decays to zero in a one-voxel (at level-$l$) neighborhood around its origin (See Appendix for definition).

\parahead{Computing a 3D implicit surface from points} Given our predicted voxel hierarchy, learned kernels $K_\theta^{(l)}$, and predicted normals $\bm n_j^{(l)}$, we compute an implicit surface by finding optimal coefficients $\bm \alpha^* = \{\{\alpha^{(l)}_i\}_{l=1}^L\}_{i=1}^{n^{(l)}}$ for the kernel field \eqref{eq:kernel}. 
We find these coefficients by exactly minimizing the following loss in the forward pass of our model (omitting the conditioning of $f_\theta$ on $\Xin, \Nin$ for brevity):
\begin{equation}
\begin{aligned}
    \bm \alpha^* = \argmin_{\alpha_i^{(l)}} & \sum_{l=1}^{L'} \sum_{i = 1}^{n^{(l)}} \|\nabla_{\bm x} f_\theta(\bm x_i^{(l)}) - \bm n_i^{(l)}\|_2^2 + \\&\sum_{j = 1}^{n_\text{in}} |f_\theta(\bm x^\text{in}_j)|^2,
\end{aligned}
\end{equation}
where $L'\leq L$ is a hyper-parameter for the hierarchy.
By minimizing this loss, we want our Neural Kernel Field $f_\theta$ to have a gradient which agrees with the normals at the voxel centers (hence regions around the surface), and to have a scalar value near zero at all the input points $\Xin$. 
Since $f_\theta$ is linear in the parameters $\alpha_i^{(l)}$, we find the optimal coefficients $\bm \alpha^*$ by solving the linear system:
\begin{equation}\label{eq:linsys}
    (\Qmat^\top \Qmat + \Gmat^\top \Gmat) \bm \alpha = \Qmat^\top \bm n,
\end{equation}
where $\bm n$ are the predicted normal vectors $\bm n_i^{(l)}$ stacked into a single vector, $\bm \alpha$ is the vector of coefficients $\alpha_i^{(l)}$, and 
\begin{equation}
    \begin{aligned}
        \Gmat = \begin{bmatrix} \Gmat^{(1)} & \ldots & \Gmat^{(L)} \end{bmatrix}, \\
        \Qmat = \begin{bmatrix} \Qmat^{(1)} & \ldots & \Qmat^{(L)} \end{bmatrix},
    \end{aligned}
\end{equation}
\begin{equation}
    \Gmat^{(l)}_{i, j} = K_\theta(\bm x^\text{in}_i, \bm x_j^{(l)}), \quad \Qmat^{(l)}_{i, j} = \partial_{\bm x_i^{(l')}} K_\theta(\bm x_i^{(l')}, \bm x_j^{(l)})
\end{equation}
are the gram matrix and partial derivatives of the gram matrix at the voxel centers where normals are defined, respectively.

We remark that the linear system~\eqref{eq:linsys} is sparse due to modulation with the compactly supported $K_b^{(l)}$, and positive definite by construction since it is a Gram matrix. As a result, \eqref{eq:linsys} can be solved very efficiently on a GPU.

\parahead{Masking module} The predicted Neural Kernel Field $f_\theta$ is defined on the entire voxel hierarchy, however at coarse levels far from the surface, it may contain unwanted geometry. To discard such geometry away from the predicted surface, we add an additional branch to our backbone as $\varphi : \RR^3 \rightarrow \{0, 1\}$ which determines if a point $\bm x$ should be trimmed ($\varphi(\bm x) = 0$) or kept ($\varphi(\bm x) = 1$). 
The branch originates from the immediate features of the backbone network and consists of a few linear layers with ReLU activations followed by a sigmoid. When we extract the final surface, we only consider vertices in regions where $\varphi(\bm x) > 0.5$.

\subsection{Supervision}
To train our model, we require pairs $(\Xin = \{\bm x_i \in \RR^3\}, \Oin = \{\bm o_i \in \RR^3\})$ and $(\Xdense = \{\bm x_j\}, \bm O_\text{dense} = \{\bm o_j\})$.
Here $\Xin$ and $\Xdense$ are noisy input points and dense supervision points respectively, and $\Oin$ and $\bm O_\text{dense}$ are sensor origin for each input and supervision point (\textit{i.e.} a position in 3D space from which each point was acquired). 
We additionally compute input and supervision normals $\Nin, \bm N_\text{dense}$ by fitting planes to points in a local neighborhood and orienting the normals to align with the directions from points to sensors.
We remark that our training requirements impose no restrictions on the shapes being trained on. 
For example, one could use a single LiDAR frame as input and an accumulated LiDAR scan of a scene as supervision, alongside a noisy scan of a synthetic object as input and a dense noiseless scan of the same object as supervision.
\setlength{\columnsep}{5pt}
\setlength{\intextsep}{0pt}
\begin{wrapfigure}[7]{r}{0.35\linewidth}
\begin{center}
\includegraphics[width=\linewidth]{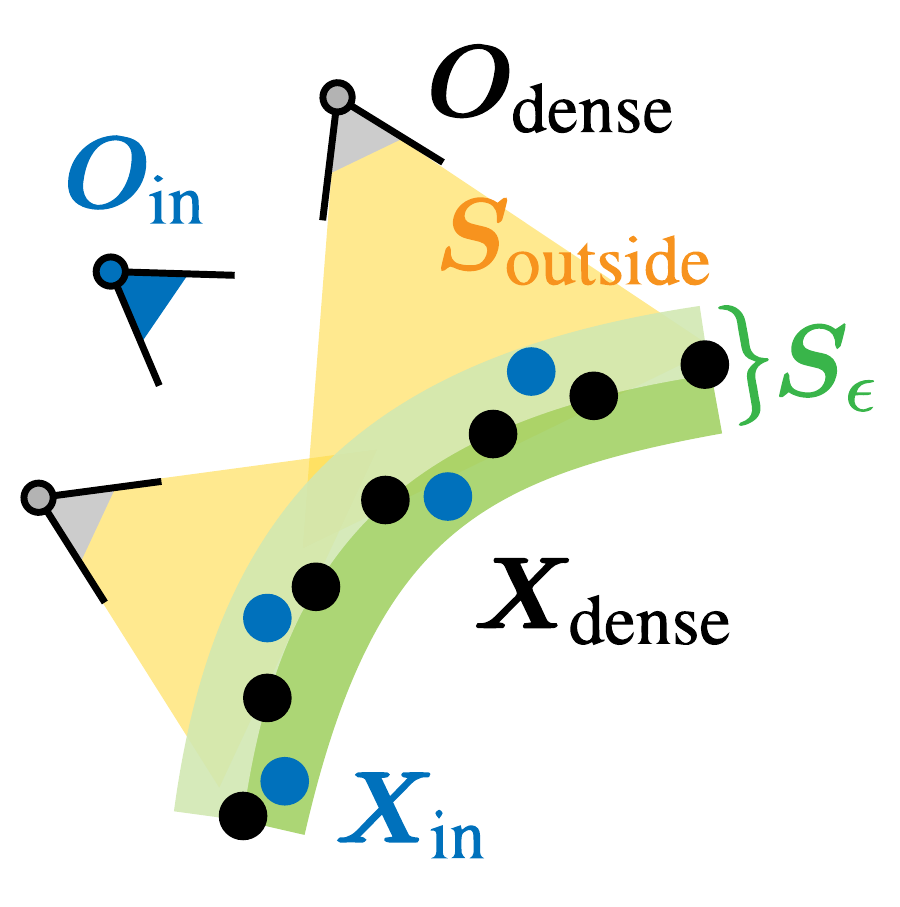}
\end{center}
\end{wrapfigure}


In order to define the loss terms used to supervise our model, we first define two regions of space around the dense points $\Xdense$:
\begin{itemize}[topsep=0pt]
    \item $\Seps$: points which are $\epsilon$ distance or less from $\Xdense$ \textit{i.e.} $\{\bm x | \min_{\bm x_j \in \Xdense} \|\bm x - \bm x_j\|_2 < \epsilon \}$,
\end{itemize}

\begin{itemize}[topsep=0pt]
    \item $\bm S_\text{outside}$: points which lie within the region enclosing points in $\Xdense$ and their sensor origin in $\bm O_\text{dense}$.
\end{itemize}


Then, given a predicted Neural Kernel Field $f_\theta(\bm{x})$, we backpropagate through the following loss functions:
\begin{itemize}
    \item $\mathcal{L}_\text{surf}(f) = \mathbb{E}_{\bm x \in \Xdense}[\|f(\bm x)\|_1]$;
    \item $\mathcal{L}_\text{tsdf}(f) = \mathbb{E}_{\bm x \in \Seps}\big[\|f(\bm x) - \text{tsdf}(\bm{x}, \Xdense)\|_1\big]$ where $\text{tsdf}(\bm x, \Xdense)$ is the ground-truth truncated signed distance computed from $\Xdense$ using nearest neighbors;
    \item $\mathcal{L}_\text{normal}(f) = \mathbb{E}_{\bm n \in \Ndense} \bigg[1 - \bigg\langle \frac{\nabla_{\bm{x}} f(\bm x)}{\|\nabla_{\bm{x}} f(\bm x)\|_2}, \bm n \bigg\rangle \bigg]$;
    \item $\mathcal{L}_\text{outside}(f) = \mathbb{E}_{\bm x \in \bm S_\text{outside}} e^{-\beta \|f(\bm x)\|_1}$, where $\beta = 0.1$;
    \item $\mathcal{L}_\text{min-surf}(f) = \mathbb{E}_{\bm x \in \Seps} \bigg[\frac{\eta \pi^{-1}}{\eta^2 + f(\bm x)^2} \bigg]$, where $\eta = 0.5$.
\end{itemize}

Here $\mathcal{L}_\text{surf}$ ensures that the implicit function is zero near the ground truth surface, $\mathcal{L}_\text{tsdf}$ ensures that the implicit field undergoes a sign change near the surface,
$\mathcal{L}_\text{normal}$ ensures the gradient of the predicted implicit agrees with the dense normals, $\mathcal{L}_\text{outside}$ ensures there is no geometry far away from the surface, and $\mathcal{L}_\text{min-surf}$ acts as a regularizer encouraging the predicted implicit surface to have minimal area~\cite{zhang2022critical}. 
The latter two losses are omitted if no sensor-based information is available.

We additionally compute structure prediction and masking losses which we describe in the Appendix.
We train our model in an end-to-end fashion using gradient descent by back-propagating through the sum of all these loss functions.




\begin{figure}
    \centering
    \includegraphics[width=\columnwidth]{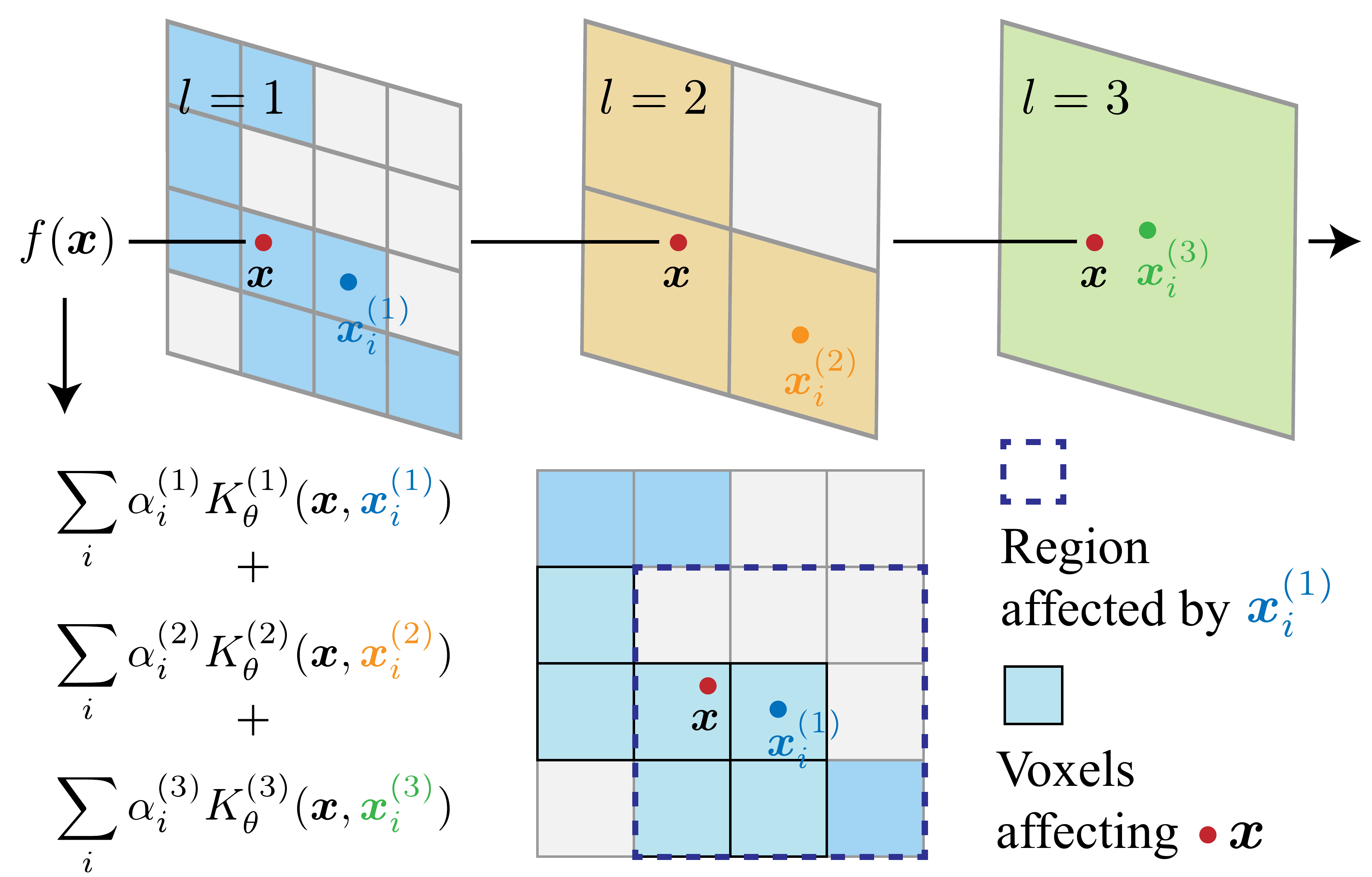}
    \vspace{-5mm}
    \caption{\textbf{Our implicit field $f(\bm x)$} is represented as a sum of kernel basis functions on a sparse voxel hierarchy. Each voxel with center $\bm x_i^{(l)}$ contributes one kernel basis function $K_\theta^{(i)}(\bm x, \bm x_i^{(l)})$ with support in the one-ring around $\bm x_i^{(l)}$.}
    \label{fig:hierarchy_illustration}
    \vspace{-2mm}
\end{figure}

\section{Experiments}

\parahead{Overview}
In this section we demonstrate that \MethodName{} fulfills the three main desired properties of practical surface reconstruction method as analyzed in \cref{sec:introduction}:
(1) \emph{Accuracy} (\cref{subsec:exp:object}), by training and testing on object-level datasets~\cite{chang2015shapenet,Koch_2019_CVPR,Thingi10K} with varying noise settings.
(2) \emph{Scalability} (\cref{subsec:exp:scale}), by evaluating on large-scale outdoor driving dataset~\cite{Dosovitskiy17}.
(3) \emph{Generalizability} (\cref{subsec:exp:general}), where we train on object-level/outdoor datasets and test on room-level datasets~\cite{dai2017scannet,Matterport3D} as well as scans with very low densities. 
Notably, to encourage the practical usage of \MethodName{}, we present a \emph{kitchen-sink-model} (denoted as `Ours - \ks{}'%
) trained on the union of various datasets \cite{chang2015shapenet,Koch_2019_CVPR,Dosovitskiy17,Matterport3D} and report its performance whenever applicable. 
While this model slightly underperforms domain-specific models, it still outperforms most baseline methods and can be used on a wide variety of inputs as shown in \cref{fig:teaser} and \cref{fig:waymo}.
We hope the kitchen-sink-model enables end-users to use NKSR in a plug-and-play manner.

\parahead{Implementation Details}
Our pipeline is fully accelerated using PyTorch and CUDA. 
The operations on our sparse hierarchy including convolution, neighborhood querying and interpolations are based on a customized tree structure that is highly efficient and scalable. 
Our sparse linear solver uses a Jacobi-preconditioned conjugate gradient method and works jointly with the sparse hierarchy for fast inference.
Unless otherwise specified, our experiments are run on a single V100 GPU with 8 CPU cores.
Hyperparameter details are given in the Appendix.

\subsection{Accuracy: Object-level Reconstruction}
\label{subsec:exp:object}

\begin{figure}[!t]
    \centering
    \includegraphics[width=\linewidth]{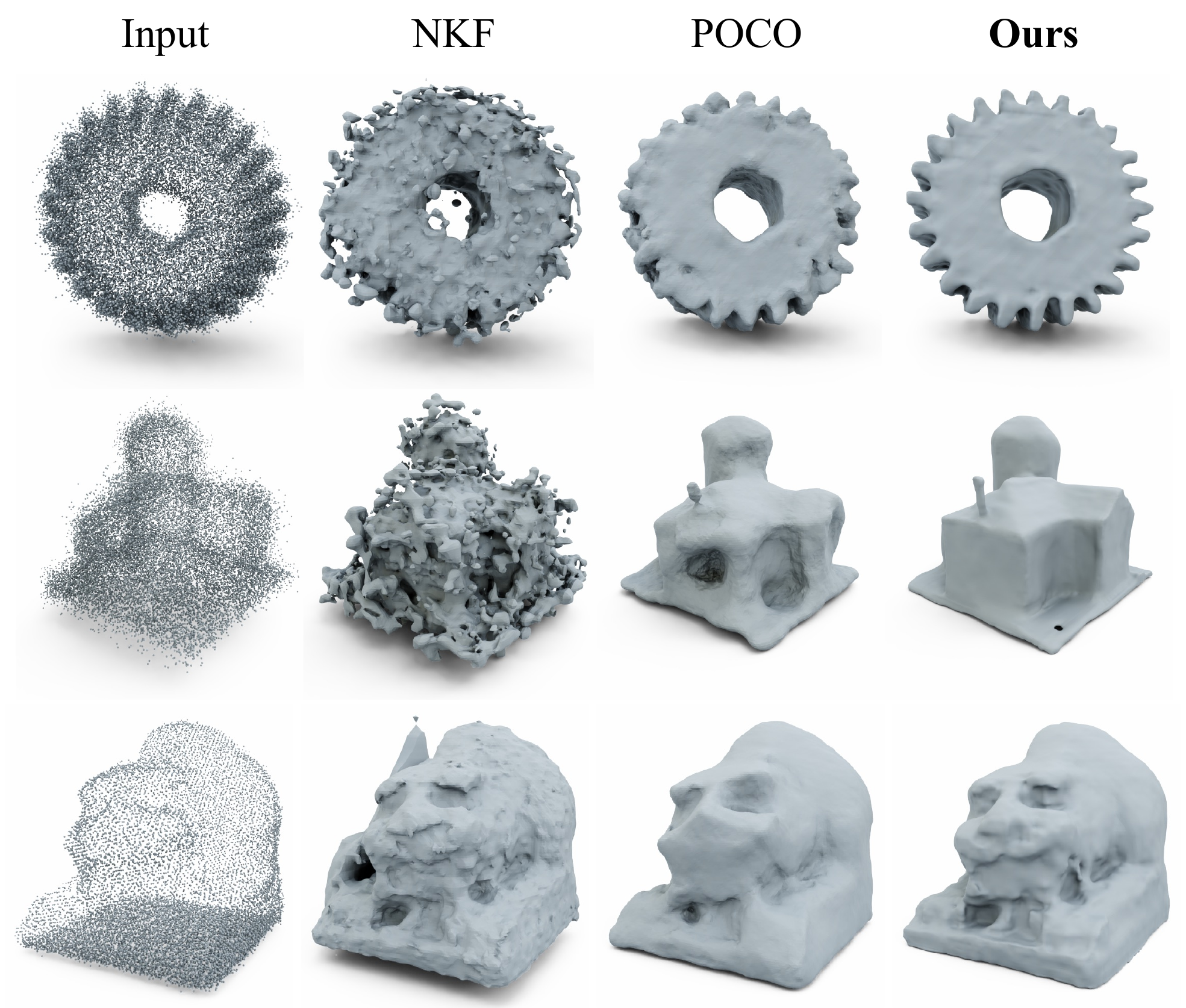}
    \vspace{-1.5em}
    \caption{\textbf{ABC/Thingi10K~\cite{Koch_2019_CVPR,Thingi10K} visualization.}}
    \label{fig:abc}
    \vspace{-0.5em}
\end{figure}

\begin{figure}[!t]
    \centering
    \includegraphics[width=\linewidth]{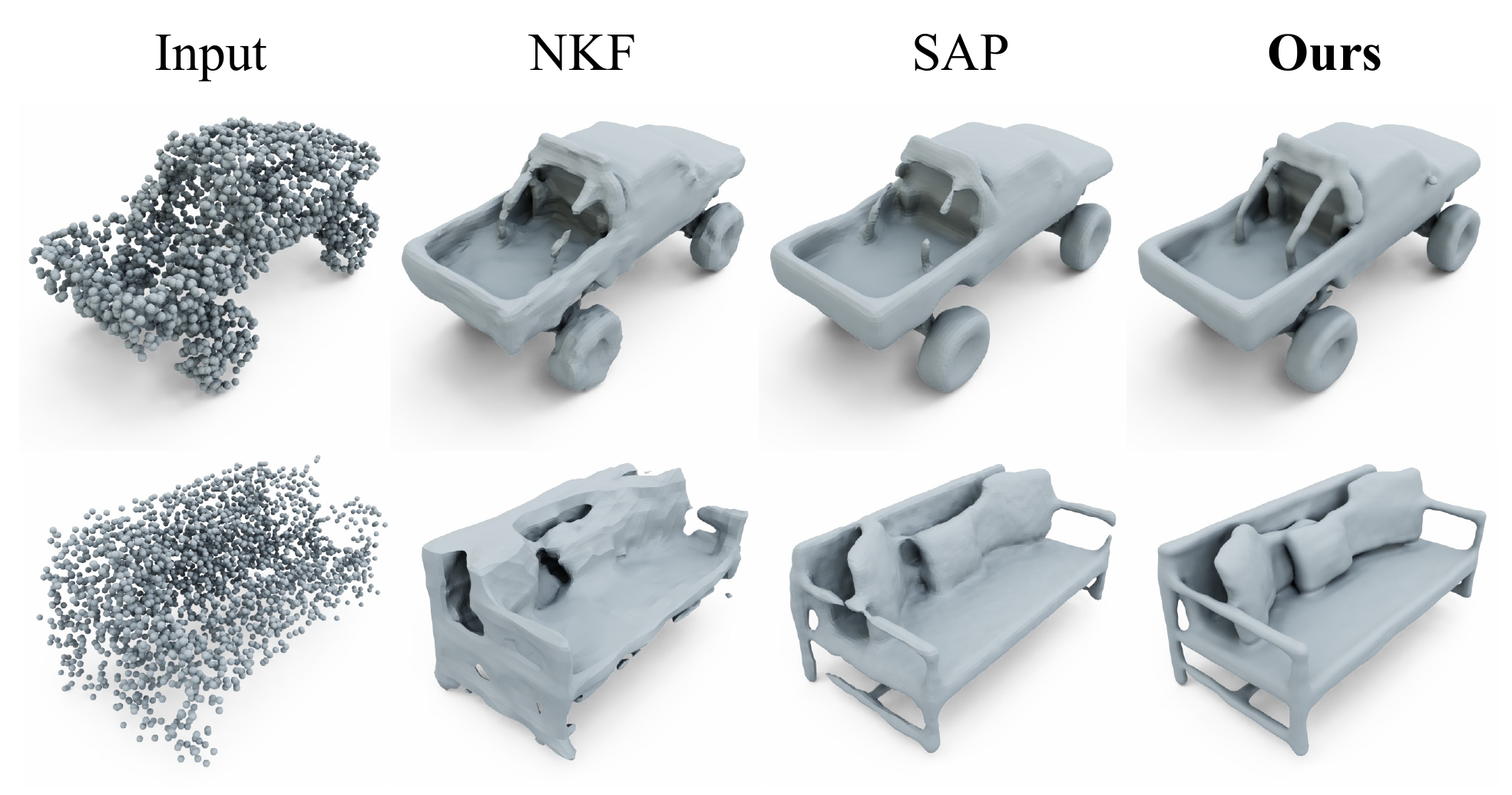}
    \vspace{-1.5em}
    \caption{\textbf{ShapeNet~\cite{chang2015shapenet} visualization.} The two shapes are with $\sigma=0.005$ and $\sigma=0.025$ Gaussian noise respectively.}
    \label{fig:shapenet}
    \vspace{-1.5em}
\end{figure}

\begin{table*}[!thbp]
\setlength{\tabcolsep}{4.65pt}
\centering
\small
\caption{\textbf{ABC/Thingi10K~\cite{Koch_2019_CVPR,Thingi10K} comparison}. $d_C$ is multiplied by $10^3$. $\sigma$ is the Gaussian noise added to the sensor depth and $L$ is the largest box length. 10 scans are used to accumulate the point cloud unless specified.}
\label{tab:abc}
\vspace{-1em}
\begin{tabular}{l|cccccc|cccccccccc}
\toprule
          & \multicolumn{6}{c|}{ABC (100 shapes)}                                                                                                                & \multicolumn{10}{c}{Thingi10K (100 shapes)}                                                                                                                                                                                                                    \\ \midrule
          & \multicolumn{2}{c|}{$\sigma=0$}                          & \multicolumn{2}{c|}{$\sigma\in [0,0.05L]$}                         & \multicolumn{2}{c|}{$\sigma=0.05L$}    & \multicolumn{2}{c|}{$\sigma=0$}                          & \multicolumn{2}{c|}{$\sigma=0.01L$}                         & \multicolumn{2}{c|}{$\sigma=0.05L$}                         & \multicolumn{2}{c|}{5 scans}                        & \multicolumn{2}{c}{30 scans}     \\
          & $d_C$$\downarrow$         & \multicolumn{1}{c|}{F-S.$\uparrow$}            & $d_C$$\downarrow$         & \multicolumn{1}{c|}{F-S.$\uparrow$}            & $d_C$$\downarrow$         & F-S.$\uparrow$            & $d_C$$\downarrow$         & \multicolumn{1}{c|}{F-S.$\uparrow$}            & $d_C$$\downarrow$         & \multicolumn{1}{c|}{F-S.$\uparrow$}            & $d_C$$\downarrow$         & \multicolumn{1}{c|}{F-S.$\uparrow$}            & $d_C$$\downarrow$         & \multicolumn{1}{c|}{F-S.$\uparrow$}            & $d_C$$\downarrow$         & F-S.$\uparrow$            \\ \midrule
SPSR~\cite{kazhdan2013screened}      & 7.02          & \multicolumn{1}{c|}{87.5}          & 11.2          & \multicolumn{1}{c|}{72.8}          & 18.8          & 47.9          & 4.23          & \multicolumn{1}{c|}{91.9}          & 5.44          & \multicolumn{1}{c|}{90.3}          & 16.5          & \multicolumn{1}{c|}{52.5}          & 12.4          & \multicolumn{1}{c|}{77.6}          & 3.07          & 96.7          \\
POCO~\cite{boulch2022poco}      & 5.34          & \multicolumn{1}{c|}{88.3}          & 8.23          & \multicolumn{1}{c|}{75.7}          & 12.0          & 58.9          & 4.42          & \multicolumn{1}{c|}{92.5}          & 5.10          & \multicolumn{1}{c|}{89.7}          & 11.2          & \multicolumn{1}{c|}{58.8}          & 6.95          & \multicolumn{1}{c|}{84.4}          & 3.69          & 95.0          \\
SAP~\cite{peng2021shape}       & 6.83          & \multicolumn{1}{c|}{85.0}          & 8.00          & \multicolumn{1}{c|}{79.5}          & 10.4          & 68.7          & 4.30          & \multicolumn{1}{c|}{92.7}          & 4.54          & \multicolumn{1}{c|}{91.8}          & 7.82          & \multicolumn{1}{c|}{\textbf{74.7}} & 6.73          & \multicolumn{1}{c|}{84.7}          & 3.95          & 93.8          \\
NKF~\cite{williams2022neural}       &  6.10         & \multicolumn{1}{c|}{88.1}              &  13.8        & \multicolumn{1}{c|}{62.3}              &  24.0        & 35.1         &  3.48             & \multicolumn{1}{c|}{94.2}              &  4.78        & \multicolumn{1}{c|}{90.8}              &  24.7      & \multicolumn{1}{c|}{34.3}              & 7.05       & \multicolumn{1}{c|}{84.8}              &  4.36      &  93.2        \\
{\scriptsize NGSolver~\cite{huang2022neuralgalerkin}}       &  3.92    & \multicolumn{1}{c|}{92.7}   &  6.35        & \multicolumn{1}{c|}{83.1}              &  9.68     & 66.4         &  2.96             & \multicolumn{1}{c|}{95.9}              &  3.51        & \multicolumn{1}{c|}{95.0}              &  8.70      & \multicolumn{1}{c|}{69.1}              & 5.65       & \multicolumn{1}{c|}{89.2}              &  2.80      &  97.1        \\
\textbf{Ours}      & \textbf{3.68} & \multicolumn{1}{c|}{\textbf{93.2}} & \textbf{6.00} & \multicolumn{1}{c|}{\textbf{85.4}} & \textbf{8.70} & \textbf{73.2} & \textbf{2.36} & \multicolumn{1}{c|}{\textbf{97.3}} & \textbf{3.19} & \multicolumn{1}{c|}{\textbf{95.9}} & \textbf{7.66} & \multicolumn{1}{c|}{\textbf{74.7}} & \textbf{5.10} & \multicolumn{1}{c|}{\textbf{89.9}} & \textbf{2.48} & \textbf{98.0} \\ \midrule
\textbf{Ours} - \ks{} & 4.10          & \multicolumn{1}{c|}{92.2}          & 6.44          & \multicolumn{1}{c|}{83.6}          & 9.97          & 68.1          & 2.92          & \multicolumn{1}{c|}{96.3}          & 3.34          & \multicolumn{1}{c|}{95.6}          & 8.55          & \multicolumn{1}{c|}{72.7}          & 5.60          & \multicolumn{1}{c|}{89.1}          & 2.54          & 97.7          \\ \bottomrule
\end{tabular}
\vspace{-1em}
\end{table*}

\begin{table}[!t]
\setlength{\tabcolsep}{4.2pt}
\centering
\small
\caption{\textbf{ShapeNet~\cite{chang2015shapenet} comparison}. `N.' denotes whether normals $\Nin$ are used as input. $d_C$ is multiplied by $10^3$.}
\label{tab:shapenet}
\vspace{-1em}
\begin{tabular}{l|c|cc|cc|cc}
\toprule
                                   &                   & \multicolumn{2}{c|}{\begin{tabular}[c]{@{}c@{}}1000 Pts.\\ $\sigma=0.0$\end{tabular}} & \multicolumn{2}{c|}{\begin{tabular}[c]{@{}c@{}}3000 Pts.\\ $\sigma=0.005$\end{tabular}} & \multicolumn{2}{c}{\begin{tabular}[c]{@{}c@{}}3000 Pts.\\ $\sigma=0.025$\end{tabular}} \\ \cmidrule(l){2-8} 
                                   & N.                & $d_C$$\downarrow$                          & IoU$\uparrow$                            & $d_C$$\downarrow$                     & IoU$\uparrow$                                    & $d_C$$\downarrow$                             & IoU$\uparrow$                                       \\ \midrule
ConvONet~\cite{peng2020convoccnet} & \multirow{5}{*}{-} & 6.07                                      & 82.3                                      & 4.35                                       & 88.0                                       & 7.31                                       & 78.7                                      \\
IMLSNet~\cite{liu2021deep}         &                   & 3.15                                      & 91.2                                      & 3.03                                       & 91.3                                       & 6.58                                       & 76.0                                      \\
SAP~\cite{peng2021shape}           &                   & 3.44                                      & 90.8                                      & 3.30                                       & 91.1                                       & 5.34                                       & \textbf{82.9}                             \\
POCO~\cite{boulch2022poco}         &                   & 3.03                                      & 92.7                                      & 2.93                                       & 92.2                                       & 5.82                                       & 81.7                                      \\
NGSolver~\cite{huang2022neuralgalerkin}      &        & 2.91                                      & 91.9                                      & 2.90                                       & 91.8                                       & 5.06                                       & 82.8                                      \\
\textbf{Ours}                      &                   & \textbf{2.64}                             & \textbf{93.4}                             & \textbf{2.71}                              & \textbf{92.6}                              & \textbf{4.96}                              & \textbf{82.9}                                      \\ \midrule
SPSR~\cite{kazhdan2013screened}    & \multirow{4}{*}{\checkmark} & 6.26                                      & 81.4                                      & 3.84                                       & 88.5                                       & 10.7                                       & 66.8                                      \\
SAP~\cite{peng2021shape}           &                   & 3.21                                      & 92.1                                      & 3.16                                       & 92.3                                       & 4.44                                       & 87.1                                      \\
NKF~\cite{williams2022neural}      &                   & 2.65                                      & 94.7                                      & 3.17                                       & 91.2                                       & 11.7                                       & 67.0                                      \\
NGSolver~\cite{huang2022neuralgalerkin}      &        & 2.47                                      & 95.0                                      & 2.51                                       & 94.1                                       & 3.93                                       & 87.5                                      \\
\textbf{Ours}                      &                   & \textbf{2.34}                             & \textbf{95.6}                             & \textbf{2.45}                              & \textbf{94.3}                              & \textbf{3.87}                              & \textbf{87.6}                             \\ \bottomrule
\end{tabular}
\vspace{-2em}
\end{table}

\parahead{Settings}
We follow two common evaluation settings from the literature.
One is the manifold ShapeNet~\cite{chang2015shapenet} dataset prepared by \cite{mescheder2019occnet}. The dataset contains man-made geometries from 13 categories, with $>$30K shapes for training and $>$8K shapes for testing.
Gaussian noise of different standard deviations (denoted as $\sigma$) is added to the randomly-subsampled points from the ground truth as input.
As many existing learning-based baselines do not need point normals $\Nin$ as input, we present a variant of our model that does not take $\Nin$ as extra input channels for a fair comparison.
The other setting is from \cite{erler2020points2surf} where a random subset of $\sim$5K shapes from ABC~\cite{Koch_2019_CVPR} is picked for training and testing, and an additional 100 shapes from Thingi10K~\cite{Thingi10K} is used for testing generalization. The input is acquired by simulating ToF sensors with different levels of noise and densities.
For the metrics we use the standard Chamfer distance ($d_C$), F-score (F-S.), normal consistency (N.C.), and intersection-over-union ratio (IoU) as benchmarks.

\parahead{Results}
The comparisons are quantitatively shown in \cref{tab:abc} and \cref{tab:shapenet}, and selectively visualized in \cref{fig:abc} and \cref{fig:shapenet}.
Our model reaches state-of-the-art performance on all the datasets.
Our baseline, NKF, works well on the noise-free setting but inelegantly degrades with higher noise due to its over-reliance on the raw input normals.
On the other hand, SAP and NGSolver are more robust under noise, but the fitting tightness as reflected by $d_C$ is higher than ours due to the lack of representation power.
Our performance gain is mainly based on the gradient-based energy fitting formulation backed up by the natural inductive biases emerging from the learned kernel.

\subsection{Scalability: Outdoor Driving Scenes}
\label{subsec:exp:scale}

\begin{table}[!t]
\setlength{\tabcolsep}{3.pt}
\centering
\small
\caption{\textbf{CARLA~\cite{Dosovitskiy17} comparison.} $d_C$ is the average of Acc. and Comp. (Unit is cm. The smaller the better.)}
\label{tab:carla}
\vspace{-1em}
\begin{tabular}{@{}l|ccc|ccc|c@{}}
\toprule
            & \multicolumn{3}{c|}{Original}               & \multicolumn{3}{c|}{Novel}                  & Time                 \\
            & Acc.         & Comp.        & F-S.$\uparrow$            & Acc.         & Comp.        & F-S.$\uparrow$            & (sec.)               \\ \midrule
{\scriptsize TSDF-Fusion~\cite{vizzo2022sensors}} & 8.2          & 8.0          & 80.2          & 8.6          & 6.6          & 80.7          & \textbf{0.5}         \\
POCO~\cite{boulch2022poco}        & 10.5          & 3.6         & 90.1          & 9.1          & 2.9         & 92.4          & 420                  \\
SPSR~\cite{kazhdan2013screened}        & 10.3         & 16.4         & 86.5          & 9.9          & 12.8         & 88.3          & 30                   \\
\textbf{Ours}        & \textbf{5.6}          & \textbf{2.2} & \textbf{93.9}          & \textbf{3.6} & \textbf{2.1} & \textbf{96.0} & 2.6 \\ \midrule
\textbf{Ours} - \ks{}   & 4.1 & 3.0          & 94.0 & 3.6 & 2.4          & 96.0 &   2.6              \\ \bottomrule
\end{tabular}
\vspace{-2em}
\end{table}

\begin{figure*}[!t]
    \centering
    \includegraphics[width=\linewidth]{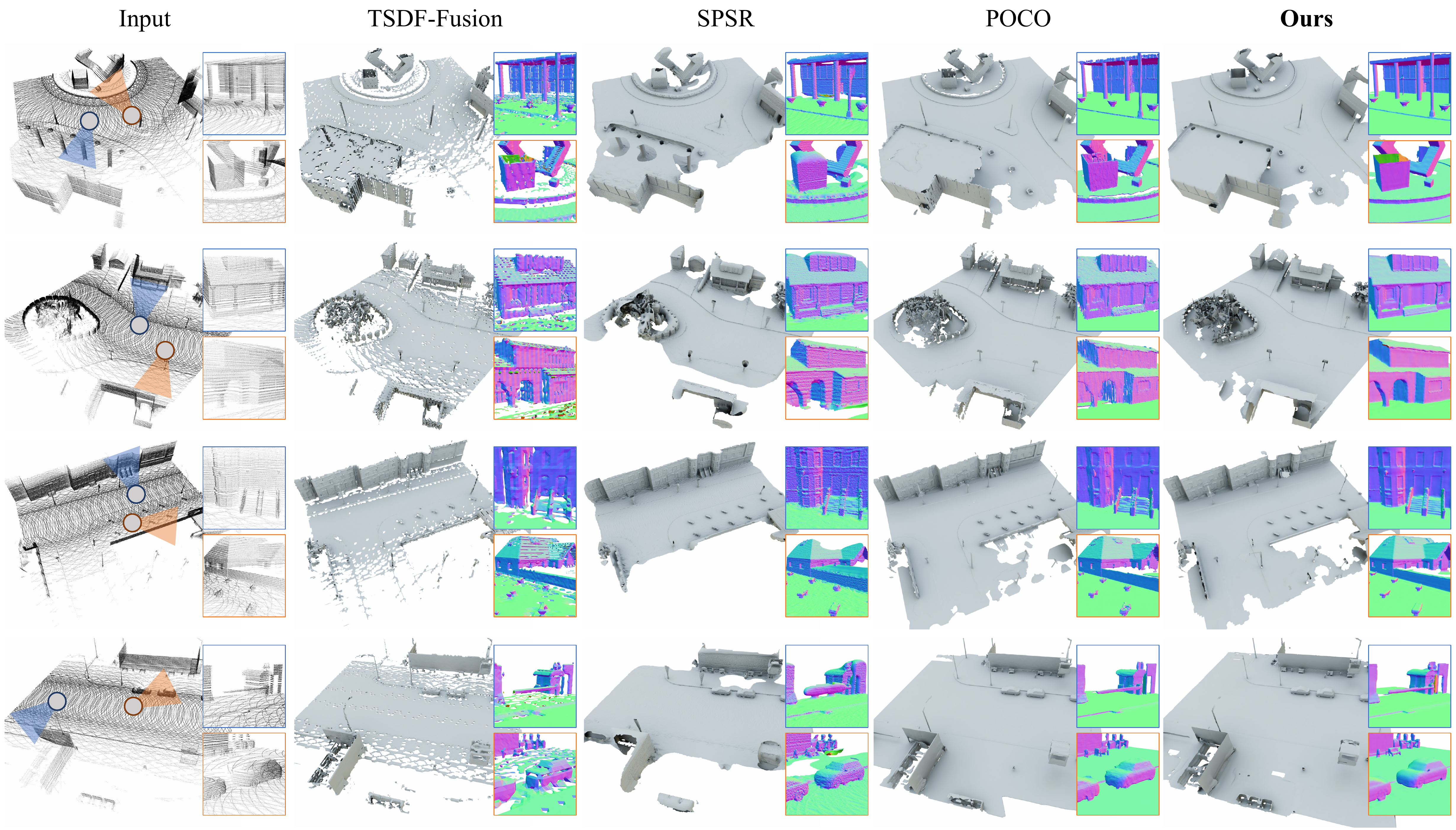}
\vspace{-2em}
    \caption{\textbf{CARLA~\cite{Dosovitskiy17} visualization.} The insets show zoom-ins captured by the cameras shown in the leftmost column with the corresponding color. The upper 2 rows are from the `Novel' subset and the lower 2 rows are from the `Original' subset.}
    \label{fig:carla}
    \vspace{-1em}
\end{figure*}

\parahead{Dataset}
The applicability of \MethodName{} to large-scale datasets is demonstrated using the synthetic CARLA~\cite{Dosovitskiy17} dataset due to the lack of large-scale real-world datasets with ground-truth geometries.
To generate such a dataset, we manually pick 3 towns and simulate 10 random drives using the engine.
We call these drives the `Original' subset.
An additional town along with its 3 drives is used only during evaluation to test generalization, which we denote as the `Novel' subset.
For $(\Xin, \Oin)$, we use a sparse 32-beam LiDAR with 0-5cm ray distance noise and 0-3$^\circ$ pose noise. For $(\Xdense, \Odense)$, we employ a noise-free highly-dense 256-beam LiDAR for ground-truth supervision.
The accumulated LiDAR points are cropped into $51.2 \times 51.2 \text{m}^2$ chunks for the ease of benchmarking.
Please find more details and visualizations in the Appendix.

\begin{table}[!t]
\setlength{\tabcolsep}{3.3pt}
\centering
\small
\caption{\textbf{Room-level dataset~\cite{dai2017scannet,Matterport3D} comparison}. $d_C$ is multiplied by $10^3$.}
\label{tab:room}
\vspace{-1em}
\begin{tabular}{l|c|ccc|ccc}
\toprule
              & \multirow{2}{*}{\begin{tabular}[c]{@{}c@{}}Training\\ Set\end{tabular}} & \multicolumn{3}{c|}{ScanNet}                  & \multicolumn{3}{c}{Matterport3D}              \\
              &                                                                 & $d_C$$\downarrow$         & F-S.$\uparrow$            & N.C.$\uparrow$  & $d_C$$\downarrow$         & F-S.$\uparrow$            & N.C.$\uparrow$          \\ \midrule
SPSR~\cite{kazhdan2013screened}          & -                                                                    & 7.04          & 84.3          & 87.2          & 10.4          & 87.0          & 92.3          \\ \midrule
LIG~\cite{huang2021di}     & \multirow{5}{*}{\begin{tabular}[c]{@{}c@{}}Shape\\ Net\end{tabular}}                                                          & 6.19          & 83.8          & 83.7          & 5.13          & 90.1          & 90.1          \\
POCO~\cite{boulch2022poco}          &                                                                           & 6.21          & 77.4          & 87.1          & 5.14          & 84.9          & 93.7          \\
NKF~\cite{williams2022neural}           &                                                                       & 6.50        & 80.9          & 84.2          & 6.48         &  84.2         &   90.4  \\
{\scriptsize DOGNN~\cite{wang2022dual}}           &                                                                       & 4.93        & 85.9          & 85.7          & 4.85         &  89.3         &   92.4  \\
\textbf{Ours} &                                                                                                 & \textbf{2.68} & \textbf{97.7} & \textbf{90.5} & \textbf{3.19} & \textbf{96.8} & \textbf{95.2} \\ \midrule
POCO~\cite{boulch2022poco}          & \multirow{3}{*}{\begin{tabular}[c]{@{}c@{}}Synth.\\ Rooms\end{tabular}}   & 5.96          & 82.5          & 82.0          & 6.52          & 80.3          & 85.9          \\
NKF~\cite{williams2022neural}           &                                                                       & 9.15          & 66.5          & 83.4          &  9.87         & 69.3          & 86.2           \\
\textbf{Ours} &                                                                                                 & \textbf{5.38} & \textbf{86.6} & 86.4          & \textbf{5.01} & \textbf{90.5} & \textbf{91.8} \\ \midrule
\textbf{Ours} & CARLA                                                                                           & 3.20 & 95.9 & 89.1 & 3.08 & 98.1 & 95.0 \\ \midrule
\textbf{Ours} - \ks{} & Mixed                                                                                   & 3.72 & 93.6 & 89.1 & 3.17 & 97.4 & 95.5 \\ \bottomrule
\end{tabular}
\vspace{-2em}
\end{table}

\begin{figure}[!t]
    \centering
    \includegraphics[width=\linewidth]{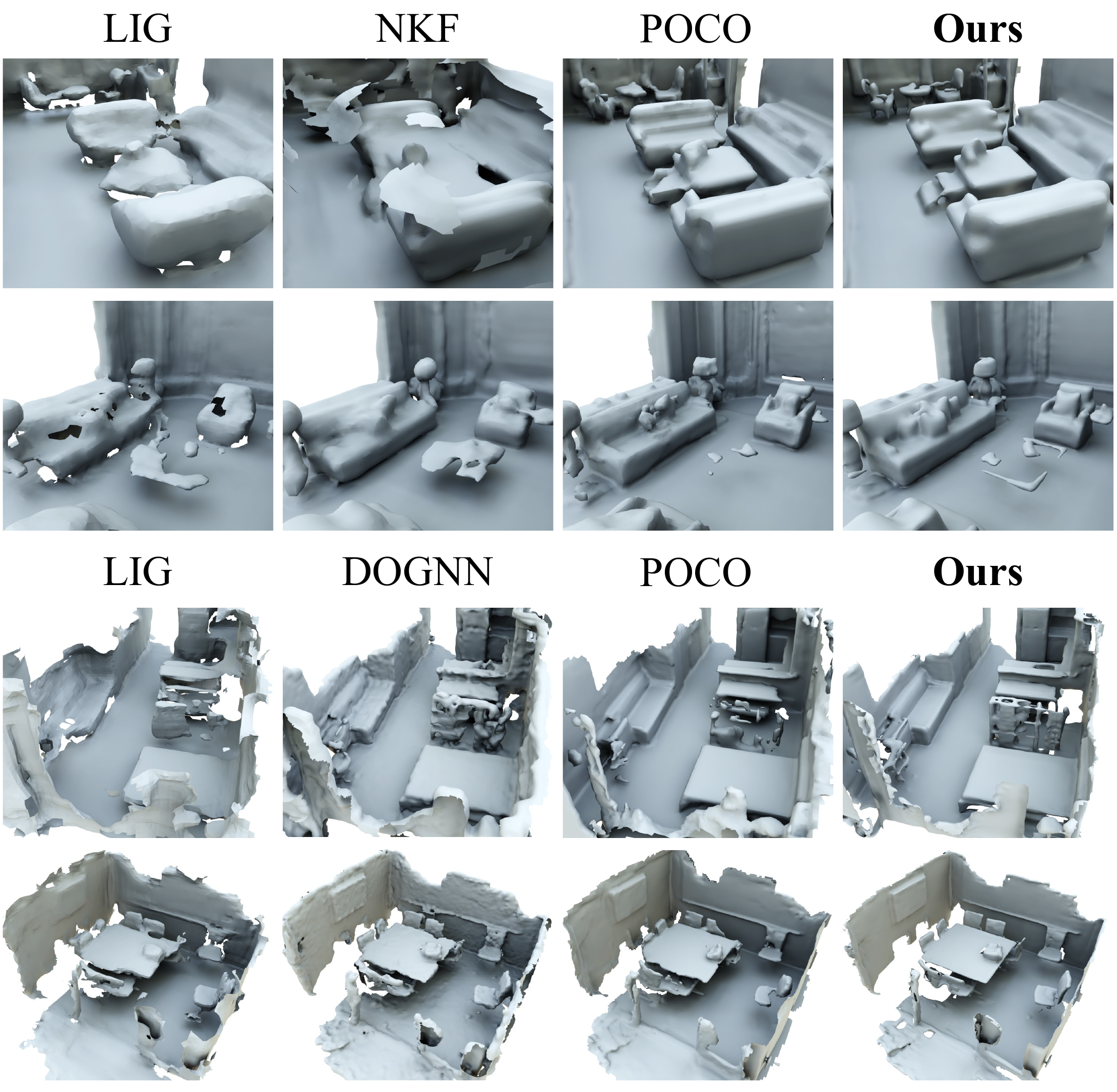}
    \vspace{-2em}
    \caption{\textbf{Room-level datasets~\cite{Matterport3D,dai2017scannet} visualization.} All the models are trained only with ShapeNet.}
    \label{fig:room}
    \vspace{-2em}
\end{figure}

\begin{figure*}[!t]
    \centering
    \includegraphics[width=\linewidth]{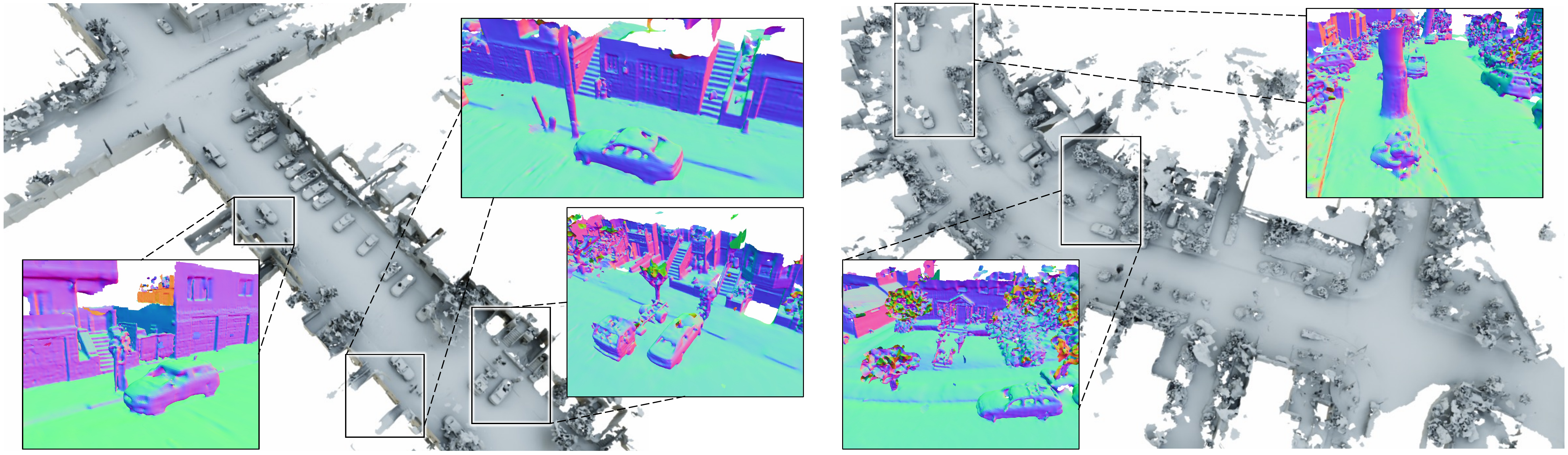}
    \caption{\textbf{Application to Waymo~\cite{Sun_2020_CVPR} dataset.} We run our kitchen-sink-model in an out-of-core manner (see Appendix for implementation details) to scale to very large scenes consisting of 10M / 11M (left / right) points, taking only 20s / 35s.}
    \label{fig:waymo}
    \vspace{-1em}
\end{figure*}

\parahead{Results}
We compare our results to TSDF-Fusion, SPSR and the learning-based POCO.
While for the latter two baselines we use the same voxel sizes $W=10$cm as ours, for TSDF-Fusion we find it necessary to increase $W$ to $30$cm to reach decent surface completeness.
The results are shown in \cref{tab:carla} and visualized in \cref{fig:carla}.
We compute single-sided Chamfer distance that reflects reconstruction accuracy (Acc.) and completeness (Comp.). We additionally report average running times for each method on the datasets. The mean/min/max number of input points in this setting are 490k/290k/820k.
Compared to ours, SPSR is quite sensitive to the noise and sparsity in the input, leaving bumpy and incomplete geometries.
Although POCO could reach a similar completeness value, the fitted surfaces fail to faithfully respect the input. The long running time (161x slower than ours) also prohibits POCO from practical use.

\vspace{0.2em}
\subsection{Generalization across Domains and Densities}
\label{subsec:exp:general}
\vspace{-0.8em}

\parahead{Across domains}
We compare the generalizability of our method with others by directly applying the models trained on ShapeNet and Synthetic Room dataset (Synth. Rooms)~\cite{peng2020convoccnet} to room-level datasets, \ie, ScanNet~\cite{dai2017scannet} and the test split of Matterport3D~\cite{Matterport3D}.
For both datasets we sample 10K points as input, and normalize the scale to roughly match the training set.
As shown by the comparisons in \cref{fig:room} and \cref{tab:room}, the generalization of our method is significantly better than the baselines, with ShapeNet training set reaching the highest accuracy possibly due to its diversity and similar geometric distributions.

\parahead{Across densities}
We test the robustness of our model under sparse input by keeping only one scan of LiDAR frame within a fixed driving distance in our CARLA dataset (`Novel' subset).
The results are shown in \cref{fig:density} and \cref{tab:density}.
At the level of extreme sparsity our method is still able to reconstruct complete geometry (\eg the ground) while the baselines start to degrade.

\begin{figure}[!t]
    \centering
    \includegraphics[width=\linewidth]{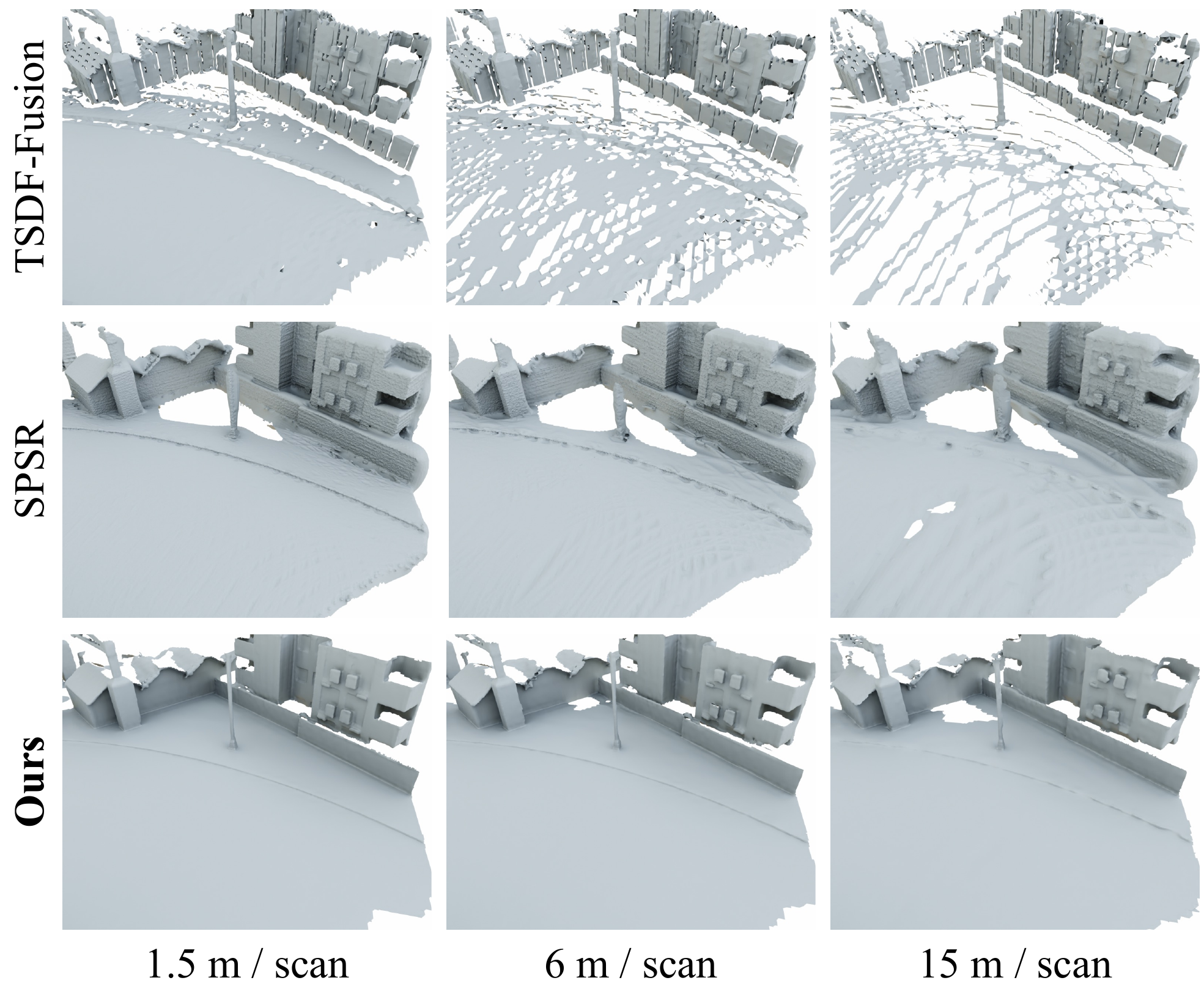}
    \vspace{-2em}
    \caption{\textbf{Generalization to different densities.}}
    \label{fig:density}
    \vspace{-1.5em}
\end{figure}

\begin{table}[!t]
\centering
\small
\caption{\textbf{Performance comparison using different input densities.} Here the F-Score $\uparrow$ metric is shown.}
\label{tab:density}
\vspace{-1em}
\begin{tabular}{@{}l|cccccc@{}}
\toprule
Meters / scan    & 1.5           & 3             & 6             & 9             & 12            & 15            \\ \midrule
TSDF-Fusion~\cite{kazhdan2013screened} & 80.0          & 80.7          & 78.4          & 74.8          & 70.6          & 67.8          \\
SPSR~\cite{kazhdan2013screened}        & 88.2          & 88.3          & 87.9          & 86.4          & 83.3          & 79.5          \\ \midrule
\textbf{Ours}        & \textbf{96.1} & \textbf{96.0} & \textbf{95.4} & \textbf{94.1} & \textbf{92.6} & \textbf{92.0} \\ \bottomrule
\end{tabular}
\end{table}

\subsection{Ablation Study}

We run our method with different feature dimensions $d$ for kernel computation, as well as different voxel sizes $W$, and the results are shown in \cref{fig:ablation}.
While increasing the feature dimension helps reach a slightly better performance, the influence of voxel sizes is more prominent.
We try to remove the hierarchies from the linear solver by setting $\{\alpha_i^{(l)} \,|\, l>0\}$ to 0 (`w/o Hier.'), or remove the gradient-based matrices $\Qmat^\top \Qmat$ (`w/o Grad.').
Both of the settings lead to a degraded performance, showing the effectiveness of our design choices. 

\begin{figure}
  \begin{minipage}[t]{0.7\linewidth}
    \vspace{0pt} 
    \centering
    \includegraphics[width=\linewidth]{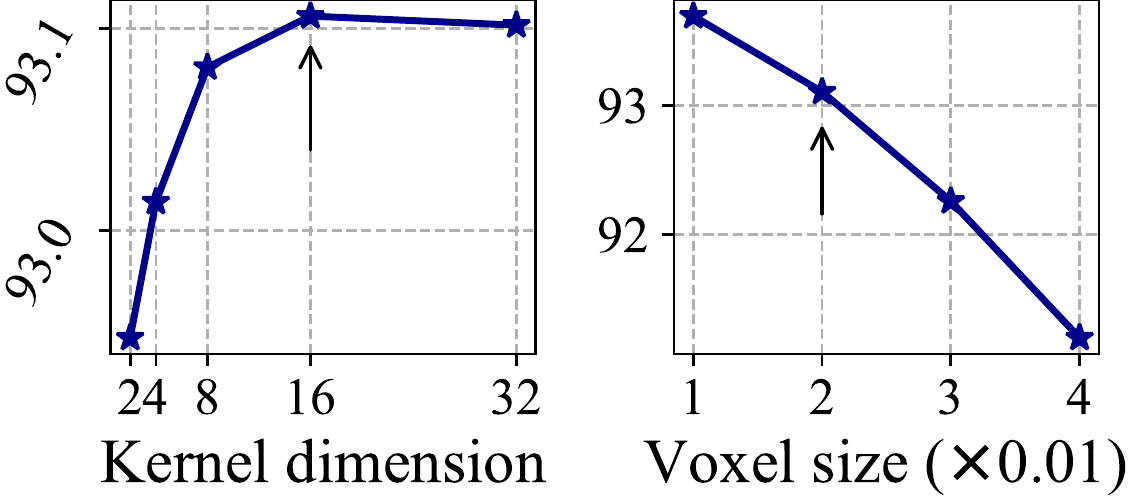}
  \end{minipage}%
  \begin{minipage}[t]{0.3\linewidth}
    \vspace{0.1em}
    \centering
    \small
\begin{tabular}{@{}cc@{}}
\toprule
\multicolumn{1}{l}{}                                & IoU                                                                          \\ \midrule
\begin{tabular}[c]{@{}c@{}}w/o\\ Hier.\end{tabular} & \begin{tabular}[c]{@{}c@{}}92.1\\ \textbf{\color[HTML]{CB0000}(-1.0)}\end{tabular}                        \\ \midrule
\begin{tabular}[c]{@{}c@{}}w/o\\ Grad.\end{tabular} & \begin{tabular}[c]{@{}c@{}}91.5\\ \textbf{\color[HTML]{CB0000}(-1.6)}\end{tabular} \\ \bottomrule
\end{tabular}
\end{minipage}
\caption{\textbf{Ablation study.} IoU metric is shown. The back arrows indicate the setting we use to obtain \cref{tab:shapenet}.}
\label{fig:ablation}
\vspace{-1.5em}
\end{figure}

\vspace{-0.5em}
\section{Conclusion}
\vspace{-0.5em}

In this paper we present \MethodName{}, an accurate and scalable surface reconstruction algorithm using the neural kernel field representation.
We show by extensive experiments that our method reaches state-of-the-art quality and efficiency, while enjoying good generalization to unseen data. We believe our method further pushes the boundary of the field of 3D reconstruction and makes deep-learning-based surface reconstruction more practical for general use.
For future work we will try further improving the reconstruction quality using more expressive kernel models, as well as reducing memory footprint to allow for even larger-scale reconstructions.


{\small
\bibliographystyle{ieee_fullname}
\bibliography{egbib}
}

\clearpage
\section*{Appendix}
\appendix
In this appendix, we first provide more details of our method, including necessary network designs and derivations in \cref{supp:sec:method}.
Details related to experiments, including hyperparameters, metrics, and baselines are documented in \cref{supp:sec:setting}.
The design of our pipeline allows for different extensions applicable to various scenarios, and these extensions are in \cref{supp:sec:ext}.
Finally, more visualizations of our results are shown in \cref{supp:sec:vis} and the accompanying video clips.

\section{Detailed Method}
\label{supp:sec:method}

\subsection{Network Architecture}
\label{supp:subsec:network}

We use a customized version of a U-Net-like structure that operates fully over sparse voxels and outputs an adaptive hierarchy for kernel field computation.

\parahead{Point Encoder}
Given the input point cloud $\Xin$ and voxel size $W$ for the finest level, we first identify the set of points that resides within each voxel.
For each voxel, we then run a residual PointNet\cite{qi2017pointnet}-like encoder network to pool all the points within it into a feature vector.
The network is illustrated in \cref{fig:supp:encoder}.
To allow for translational equivariance, we convert from the global coordinates of the input points to local coordinates within the voxels as input $\tilde{\Xin}$.
For most of the datasets demonstrated in the main paper, per-point normal $\Nin$ is required as an additional piece of information to disambiguate the orientations.
This information does not have to be very accurate and usually can be easily obtained from sensor positions $\Oin$.
$\tilde{\Xin}$ and $\Nin$ are concatenated as a 6-dimensional input that is fed into the point encoder.

\begin{figure}[t]
    \centering
    \includegraphics[width=\linewidth]{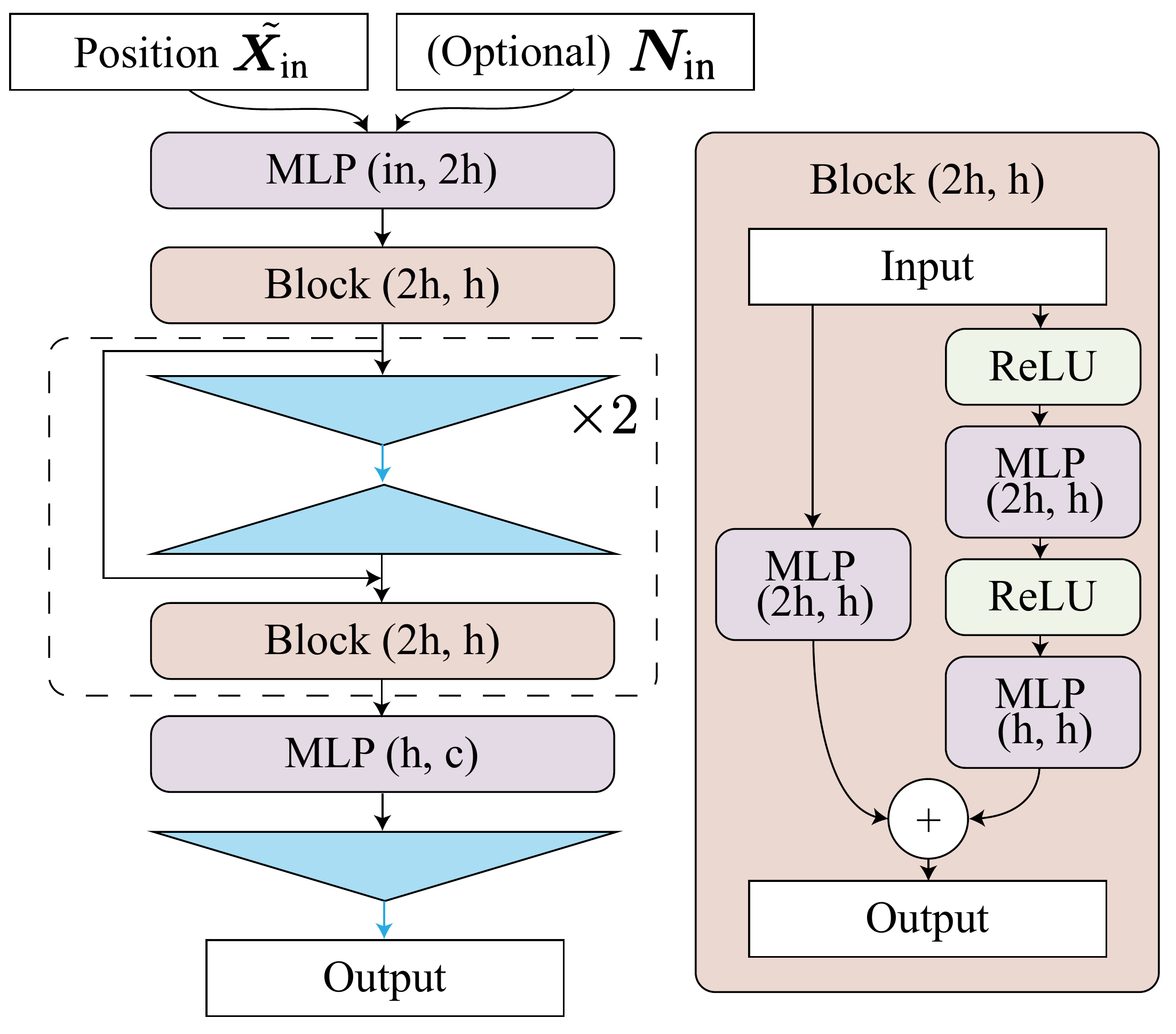}
    \caption{\textbf{Architecture for the point encoder.} Rounded rectangles show operations over each point, where $h$ is the hidden dimension. Blue triangles denote max pooling and repeating operations.}
    \label{fig:supp:encoder}
\end{figure}

\parahead{U-Net Encoder}
After quantizing per-point information into voxel-level features, we obtain a sparse voxel grid.
We then apply a sequential sparse convolution layers~\cite{graham2017submanifold} sandwiched by max pooling layers to coarsen the voxels, as shown in the upper part of \cref{fig:supp:unet}.
Deeper layers have larger receptive field and conceptually cover the area of $2^{l-1}W$.

\begin{figure}
    \centering
    \includegraphics[width=\linewidth]{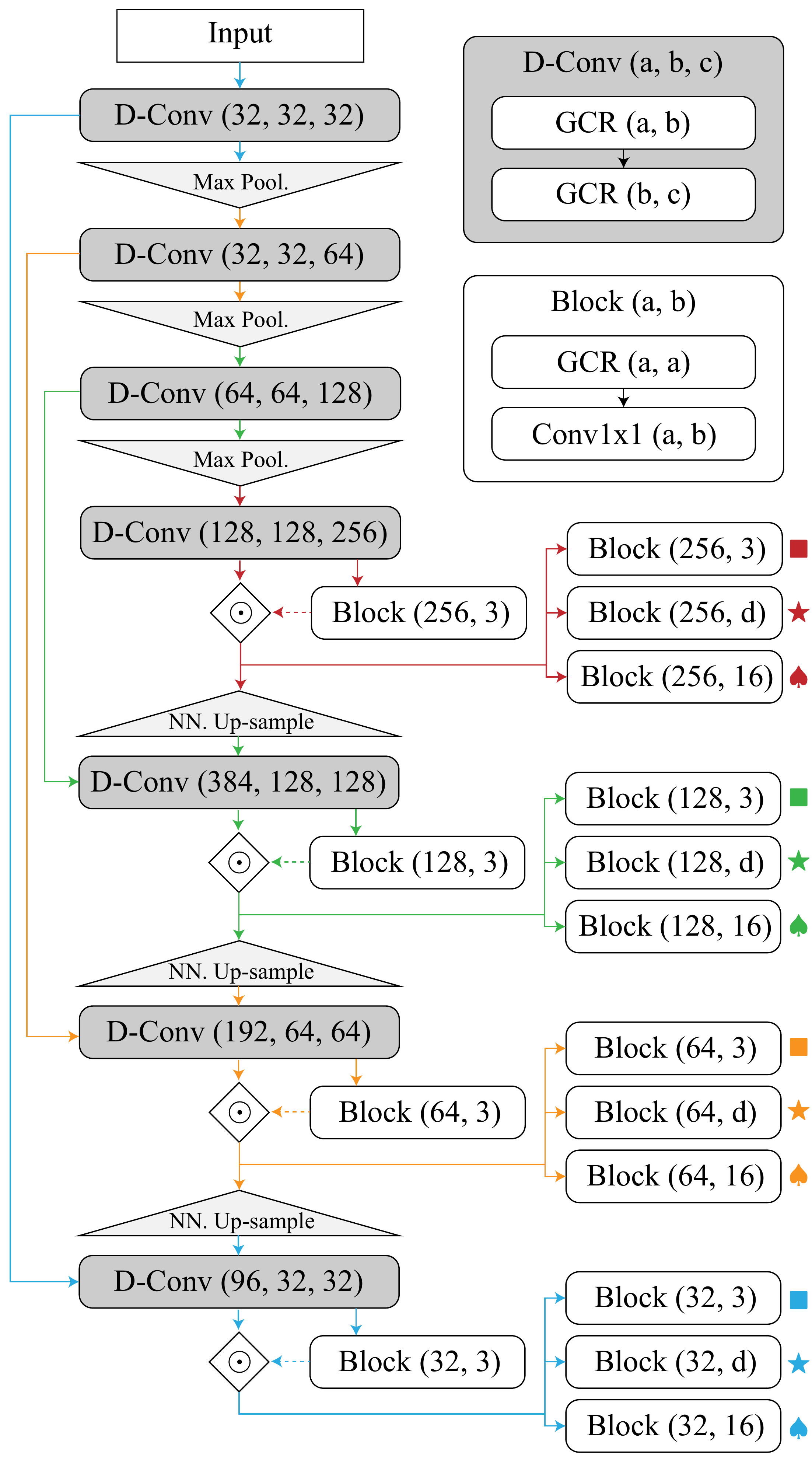}
    \caption{\textbf{Architecture for the U-Net.} `GCR' means a sequential application of GroupNorm, Convolution and ReLU activation. `$\odot$' denote voxel masking using the structure prediction results. Different colors of the arrows represent features at different layers in the hierarchy.}
    \label{fig:supp:unet}
\end{figure}

\parahead{U-Net Decoder}
As a reverse process of encoding, the decoder of the sparse U-Net also consists of several convolution layers with nearest-neighbour-based up-sampling, as illustrated in the lower part of \cref{fig:supp:unet}.
Skip connections are added to encourage fusion of low-level and high-level information.
For each layer we append additional task-specific branches to the backbone features that regress the following attributes needed in the following procedure:
\begin{itemize}[itemsep=0pt]
    \item Structure prediction branch outputs 3-dimensional features to determine the structure of the output hierarchy. Details are presented in the next paragraph.
    \item Normal prediction branch ($\blacksquare$) outputs the normals $\bm n_{i}^{(l)}$ that is 3-dimenensional and used later in the linear system. Note that there is no direct supervision to this branch and we find such a strategy provide better results due to the additional degrees of freedom introduced.
    \item Kernel prediction branch ($\bigstar$) outputs the features $\bm z_{i}^{(l)} \in \RR^d$ that is defined in the main text. MLP followed by Bézier interpolation are used to obtain the kernel field at arbitrary position.
    \item Mask prediction branch ($\spadesuit$) outputs 16-dimenensional features and are later transformed to a scalar value by MLP that determines whether the query position is far away from the real surface. In the main text the masking module is denoted as function $\varphi(\cdot)$.
\end{itemize}

\parahead{Structure Prediction}
We treat the 3-dimenensional features from the structure prediction branch as a 3-way classification score for each voxel.
Based on the classification, the voxels will be treated differently and altogether form a predicted new hierarchy, with the guarantee that the region defined by finer voxel is always covered by coarser voxels.
The semantics for these classifications are as follows: 
\begin{enumerate}
    \item \texttt{Subdivide}: the voxel should be subdivided into 8($=2^3$) voxels in the finer level.
    \item \texttt{Keep-as-is}: the voxel should be treated as a leaf node in the hierarchy, \ie, it is neither subdivided nor deleted.
    \item \texttt{Delete}: the voxel should be deleted from the hierarchy.
\end{enumerate}
An illustration for the structure prediction is shown in \cref{fig:supp:structure}.
Note that the hierarchy forms on the fly with the decoding process, and the other feature prediction branches are based only on the existing voxels (\ie, not classified as \texttt{Delete}).

\begin{figure}[t]
    \centering
    \includegraphics[width=\linewidth]{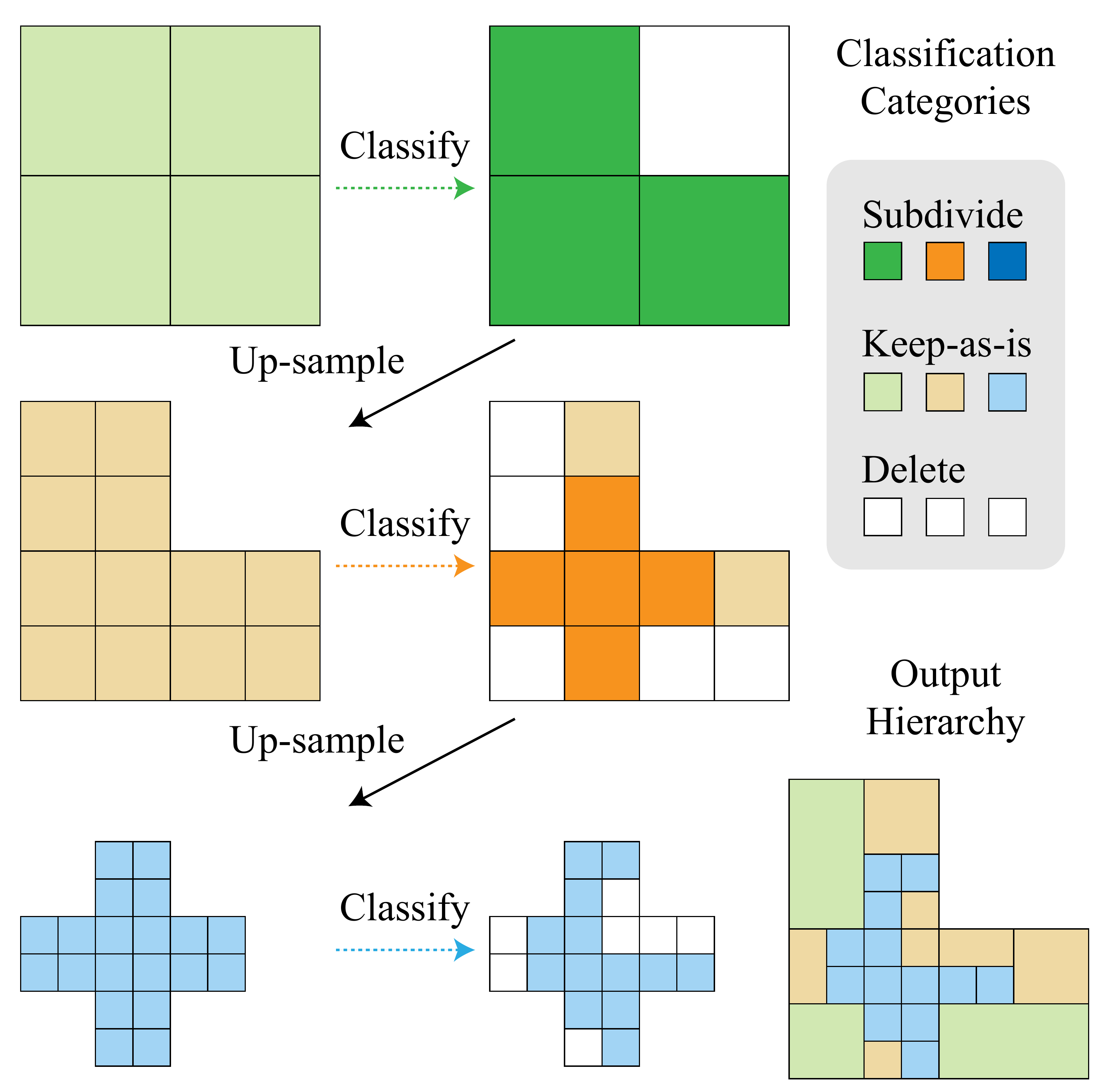}
    \caption{\textbf{Structure prediction.} We use a $L=3$ hierarchy as an example here. According to different classification results, the voxels will be treated differently.}
    \label{fig:supp:structure}
\end{figure}

\subsection{Hierarchical Kernel Formulation}
\label{supp:subsec:kernel}

We now give a detailed description of our hierachical neural kernel field formulation and procedure for solving for coefficients during inference. We then prove several facts about our formulation: namely that our learned kernel is indeed a kernel, that our predicted implicit belongs to an RKHS defined by that kernel, and that our linear system is symmetric and positive definite, and thus corresponds to a Gram matrix. 

\parahead{Defining the Neural Kernel Field} 
Recall that our shape is encoded as the zero level set of a Neural Kernel Field $f_\theta: \RR^3 \rightarrow \RR$ defined as a weighted combination of \emph{positive definite kernels} which are conditioned on the inputs and centered at the midpoints $\bm x_i^{(l)} \in \RR^3$ of each voxel in the predicted hierarchy:
\begin{align}\label{eq:supp_kernel}
    f_\theta(\bm x | \Xin, \Nin) = \sum_{i, l} \alpha_i^{(l)} K^{(l)}_\theta(\bm x, \bm x_i^{(l)} | \Xin, \Nin),
\end{align}
where $\alpha_i^{(l)} \in \RR$ are scalar coefficients at the $i^\text{th}$ voxel at level $l = 1, \ldots L$ in the hierarchy, and $K^{(l)}_\theta$ is the predicted kernel for the $l^\text{th}$ level defined as 
\begin{equation}
\begin{aligned}
    K^{(l)}_\theta(\bm x, \bm x') = \langle &\phi_\theta^{(l)}(\bm x; \Xin, \Nin),\\ &\phi_\theta^{(l)}(\bm x'; \Xin, \Nin) \rangle \cdot K^{(l)}_\text{b}(\bm x, \bm x').\nonumber
\end{aligned} 
\end{equation}
Here, $\langle\cdot,\cdot\rangle$ is the dot product, $\phi_\theta^{(l)} : \RR^3 \rightarrow \RR^d$ is the feature field extracted from the $l^\text{th}$ level of the hierarchy, and $K^{(l)}_b : \mathbb{R}^3 \times \mathbb{R}^3 \rightarrow \mathbb{R}$ is the \emph{Bézier Kernel}:
\begin{equation}
    K_b^{(l)}(\bm x, \bm x') = \psi^2(\frac{\bm x_x - \bm x'_x}{2^{l-1}W}) \cdot \psi^2(\frac{\bm x_y - \bm x'_y}{2^{l-1}W}) \cdot \psi^2(\frac{\bm x_z - \bm x'_z}{2^{l-1}W}),\nonumber
\end{equation}
with $\psi^2: \RR \rightarrow \RR$ the univariate second order B-spline:
\begin{equation}
    \psi^2(s) = \begin{cases}
    (s + \frac{3}{2})^2               & \text{if } s \in [-\frac{3}{2}, -\frac{1}{2}]\\
    -2s^2 + \frac{3}{2}               & \text{if } s \in [-\frac{1}{2}, \frac{1}{2}]\\
    (s - \frac{3}{2})^2               & \text{if } s \in [\frac{1}{2}, \frac{3}{2}]\\
    0                                 & \text{otherwise}
    \end{cases}\nonumber
\end{equation}
which decays to zero in a one-voxel (at level-$l$) neighborhood around its origin. 

\begin{lemma}\label{lem:posdef}
The basis functions~\eqref{eq:supp_kernel} used to construct our hierarchy are positive definite kernels.
\end{lemma}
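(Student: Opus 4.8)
The plan is to reduce \cref{lem:posdef} to two textbook facts about kernels and then isolate the one genuinely non-routine ingredient. Fact one: for any map $\Phi:\RR^3\to\mathcal H$ into an inner-product space, $(\bm x,\bm x')\mapsto\langle\Phi(\bm x),\Phi(\bm x')\rangle_{\mathcal H}$ is a positive definite kernel, since for any finite $\{\bm x_k\}$ and scalars $\{c_k\}$ one has $\sum_{k,m}c_k c_m\langle\Phi(\bm x_k),\Phi(\bm x_m)\rangle_{\mathcal H}=\big\|\sum_k c_k\Phi(\bm x_k)\big\|_{\mathcal H}^2\ge 0$. Fact two (Schur product theorem): the pointwise product of positive definite kernels is positive definite. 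Since $K^{(l)}_\theta(\bm x,\bm x')=\langle\phi^{(l)}_\theta(\bm x),\phi^{(l)}_\theta(\bm x')\rangle\cdot K^{(l)}_b(\bm x,\bm x')$, it suffices to prove that the feature-inner-product factor and the Bézier factor $K^{(l)}_b$ are each positive definite. The first is immediate from Fact one with $\Phi=\phi^{(l)}_\theta$ and $\mathcal H=\RR^d$, so the whole argument comes down to $K^{(l)}_b$.

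For $K^{(l)}_b$ I would first peel off its tensor-product structure: with $h=2^{l-1}W$ we have $K^{(l)}_b(\bm x,\bm x')=\prod_{c\in\{x,y,z\}}k_h(\bm x_c,\bm x'_c)$, where $k_h(s,t):=\psi^2\big((s-t)/h\big)$. Each coordinate factor extends to a kernel on $\RR^3\times\RR^3$ that simply ignores the other two coordinates, so a second application of the Schur product theorem reduces positive definiteness of $K^{(l)}_b$ to positive definiteness of the single univariate kernel $k_h$. To handle $k_h$ I would exhibit an explicit $L^2$ feature map for the B-spline: the compactly supported bump $\psi^2$ is, up to a positive constant, an autocorrelation $\psi^2(s)=\int_\RR\psi(u)\,\psi(u-s)\,du$ of a square-integrable, compactly supported lower-order spline $\psi$ (equivalently, $\psi^2$ has a nonnegative Fourier transform). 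Then $k_h(s,t)=\big\langle\psi\big((\cdot-s)/h\big),\psi\big((\cdot-t)/h\big)\big\rangle_{L^2}$ up to that constant, and Fact one, now with $\mathcal H=L^2(\RR)$, gives $\sum_{k,m}c_k c_m\,k_h(s_k,s_m)=\big\|\sum_k c_k\,\psi((\cdot-s_k)/h)\big\|_{L^2}^2\ge 0$. Composing the three reductions, $K^{(l)}_\theta$ is a product of positive definite kernels, hence positive definite, which is \cref{lem:posdef}.

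The only part that is not pure bookkeeping is the last step: showing that the compactly supported modulating spline $\psi^2$ appearing in $K^{(l)}_b$ is itself positive definite, i.e. that it really does admit the autocorrelation / nonnegative-spectrum representation invoked above. Everything preceding it — the two feature-map computations and the two invocations of the Schur product theorem — is mechanical. I would therefore spend the bulk of the proof verifying this spline property, either by explicitly naming the ``square-root'' spline $\psi$ and checking $\psi\star\psi\propto\psi^2$ piecewise against the three branches in the definition of $\psi^2$, or by computing $\widehat{\psi^2}$ and confirming its sign, and I would treat the reduction of the lemma to this single fact as essentially automatic.
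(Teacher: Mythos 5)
Your high-level route is the same as the paper's: write $K^{(l)}_\theta$ as the product of a feature-inner-product kernel and the Bézier kernel, note that the first factor is positive definite because it is literally the Gram of a feature map, and invoke closure of positive definite kernels under pointwise products (the paper cites Definition~2.8 and Proposition~3.22 of \cite{shawe2004kernel} for exactly these two facts and stops there). You correctly observe that this leaves one thing to check which the paper silently assumes, namely that $K^{(l)}_b$ is itself positive definite, and your tensor-product reduction of that question to the univariate function $\psi^2$ is sound.

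The problem is that the fact you then defer to ``the bulk of the proof'' is false. The $\psi^2$ defined in the paper is (twice) the quadratic B-spline $B_2 = B_0 * B_0 * B_0$, the threefold convolution of the unit box. Its Fourier transform is proportional to $\bigl(\sin(\omega/2)/(\omega/2)\bigr)^3$, an odd power of a sign-changing function, which is strictly negative for $\omega \in (2\pi, 4\pi)$. By Bochner's theorem $\psi^2$ is therefore not a positive definite function, and equivalently it is not the autocorrelation of any real square-integrable $\psi$ (that would force $\widehat{\psi^2} = |\hat\psi|^2 \ge 0$). The B-splines that do admit your autocorrelation representation are the odd-degree ones ($B_1 = B_0 \star B_0$, $B_3 = B_1 \star B_1$, and so on); the even-degree ones, including the one actually used here, do not. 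So the piecewise verification you propose cannot succeed, and with it the claim that $K_b^{(l)}$, hence $K_\theta^{(l)}$, is a positive definite kernel is not established by this route. To be fair, the paper's own one-line proof has the identical hole --- it never argues that $K_b$ is a kernel --- so you have isolated exactly the right crux; but closing it would require either replacing $\psi^2$ by an autocorrelation-type window (e.g.\ the triangle or cubic B-spline), or abandoning pointwise positive definiteness of $K_\theta^{(l)}$ and arguing positive semidefiniteness of the specific system matrix $\Qmat^\top\Qmat + \Gmat^\top\Gmat$ directly, as the paper's Lemma~2 attempts separately.
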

\begin{proof}
The kernel $K^{(l)}_\theta(\bm x, \bm x')$ at each level is defined as the dot product between features $\phi_\theta^{(l)}(\bm x)$ and $\phi_\theta^{(l)}(\bm x')$ multiplied by the Bézier Kernel $K_b$. A kernel, by definition is a dot product of feature embeddings (Definition 2.8 in~\cite{shawe2004kernel}), and the product of kernels is a kernel (Proposition 3.22 in~\cite{shawe2004kernel}). Therefore each $K^{(l)}_\theta$ is a kernel.
\end{proof}


\begin{remark} 
Our functions $f_\theta$ defined on the hierarchy of kernels $K_\theta$ belong to an RKHS $\mathcal{H}$ induced by a kernel $\mathcal{K}$. This follows immediately from the Lemma~\ref{lem:posdef} and the Moore–Aronszajn theorem \cite{aronszajn1950theory}.
\end{remark}

\parahead{Computing a 3D Implicit Surface from Points} 
Recall that we compute an implicit surface by finding optimal coefficients $\bm \alpha^* = \{\{\alpha^{(l)}_i\}_{l=1}^L\}_{i=1}^{n^{(l)}}$ for the kernel field \eqref{eq:supp_kernel}. \ie, given the predicted voxel hierarchy, learned kernels $K_\theta^{(l)}$, and predicted normals $\bm n_j^{(l)}$, we minimize the following loss in the forward pass of our model (omitting the conditioning of $f_\theta$ on $\Xin, \Nin$ for brevity):
\begin{equation}\label{eq:lin_loss_supp}
\begin{aligned}
    \bm \alpha^* = \argmin_{\alpha_i^{(l)}} & \sum_{l=1}^{L'} \sum_{i = 1}^{n^{(l)}} \|\nabla_{\bm x} f_\theta(\bm x_i^{(l)}) - \bm n_i^{(l)}\|_2^2 + \\&\sum_{j = 1}^{n_\text{in}} |f_\theta(\bm x^\text{in}_j)|^2,
\end{aligned}
\end{equation}
where $L'\leq L$ is a hyper-parameter for the hierarchy.
We can rewrite~\eqref{eq:lin_loss_supp} in matrix form
\begin{equation}
    \argmin_{\bm \alpha} \|\Qmat \bm \alpha - \bm n\|^2_2 + \|\Gmat \bm \alpha \|_2^2,\nonumber
\end{equation}
where 
\begin{equation}
    \begin{aligned}
        \Gmat &= \begin{bmatrix} \Gmat^{(1)} & \ldots & \Gmat^{(L)} \end{bmatrix}, \;\; \Qmat = \begin{bmatrix} \Qmat^{(1)} & \ldots & \Qmat^{(L)} \end{bmatrix}, \\
        \bm n &= \begin{bmatrix} \bm n_{1, x} & \bm n_{1, y} & \bm n_{1, z} & \ldots & \bm n_{n_\text{v}, x} & \bm n_{n_\text{v}, y} & \bm n_{n_\text{v}, z} \end{bmatrix}^\top, \\
        \bm \alpha &= \begin{bmatrix} \alpha_1^{(1)} & \ldots & \alpha_{n^{(1)}}^{(1)} & \ldots & \alpha_1^{(L)} & \ldots & \alpha_{n^{(L)}}^{(L)} \end{bmatrix}^\top.
    \end{aligned}\nonumber
\end{equation}
Here $n_\text{v} = \sum_{l=1}^{L'} n^{(l)}$ and the matrix $\Gmat$ is the Gram matrix of the kernel defined as:
\begin{equation}
    \Gmat^{(l)}_{i, j} = K_\theta^{(l)}(\bm x^\text{in}_i, \bm x_j^{(l)}),\nonumber
\end{equation}
and the matrix $\Qmat$ is the matrix of partial derivatives of $\Gmat$ defined as:
\begin{equation}
        \Qmat = \begin{bmatrix} \Qmat^{(1)} & \ldots & \Qmat^{(L)} \end{bmatrix}, \quad
        \Qmat^{(l)} = \begin{bmatrix} \Qmat^{(l)}_x & \Qmat^{(l)}_y & \Qmat^{(l)}_z \end{bmatrix}\nonumber
\end{equation}
with
\begin{equation}
    \Qmat^{(l)}_{[x|y|z], i, j} = \partial_{[x|y|z]} K_\theta^{(l)}(\bm x^\text{in}_i, \bm x_j^{(l)}).\nonumber
\end{equation}
Setting the gradient with respect to $\bm \alpha$ of~\eqref{eq:lin_loss_supp} to $\bm 0$, we find that the optimal $\bm \alpha^*$ is the solution to the linear system:
\begin{equation}
    (\Qmat^\top \Qmat + \Gmat^\top \Gmat) \bm \alpha = \Qmat^\top \bm n.\nonumber
\end{equation}

\begin{lemma}
The matrix $\Qmat^\top \Qmat + \Gmat^\top \Gmat$ used to solve for the coefficients $\alpha^{(l)}_i$ is symmetric and positive definite.
\end{lemma}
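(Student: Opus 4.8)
The plan is to exploit the fact that the system matrix $\Qmat^\top\Qmat + \Gmat^\top\Gmat$ is already written as a sum of two Gram-type matrices, each of the form $A^\top A$, so symmetry is automatic and positive semidefiniteness is immediate; the only real work is upgrading semidefiniteness to strict definiteness. For symmetry, I would simply note that for any real matrices $\Qmat,\Gmat$ of compatible shape, $(\Qmat^\top\Qmat + \Gmat^\top\Gmat)^\top = \Qmat^\top\Qmat + \Gmat^\top\Gmat$. For the semidefinite part, for any coefficient vector $\bm\alpha$ we have $\bm\alpha^\top(\Qmat^\top\Qmat + \Gmat^\top\Gmat)\bm\alpha = \|\Qmat\bm\alpha\|_2^2 + \|\Gmat\bm\alpha\|_2^2 \ge 0$, which is exactly the fitting energy \eqref{eq:lin_loss_supp} evaluated at $\bm\alpha$.

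The crux is strict positivity: I must show $\|\Qmat\bm\alpha\|_2^2 + \|\Gmat\bm\alpha\|_2^2 = 0$ forces $\bm\alpha = \bm 0$. Here I would invoke the interpretation of these two terms via the RKHS structure established in Lemma~\ref{lem:posdef} and the following remark. The function $f_\theta(\cdot) = \sum_{i,l}\alpha_i^{(l)} K_\theta^{(l)}(\cdot, \bm x_i^{(l)})$ lies in the RKHS $\mathcal H$ of the (full, block-summed) kernel $\mathcal K$, and the two energy terms are, respectively, $\sum_{i,l}\|\nabla f_\theta(\bm x_i^{(l)})\|_2^2$ and $\sum_j |f_\theta(\bm x_j^{\text{in}})|^2$. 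So if the energy vanishes, $f_\theta$ has zero gradient at every voxel center $\bm x_i^{(l)}$ (for $l \le L'$) and zero value at every input point. The argument I would then run is: because each level's kernel $K_\theta^{(l)}$ is modulated by the compactly supported Bézier kernel $K_b^{(l)}$ whose support is the one-ring of a level-$l$ voxel, restricting $f_\theta$ to a small neighborhood of a voxel center at the finest relevant level isolates (up to contributions from overlapping one-rings) the local basis functions; combined with the fact that the $\{\bm x_i^{(l)}\}$ are distinct centers and the Gram matrix of a positive-definite kernel on distinct points is nonsingular, vanishing gradients/values pin down $\bm\alpha$ to zero. A cleaner route, if the feature maps $\phi_\theta^{(l)}$ are generic enough, is to argue directly that the concatenated operator $\begin{bmatrix}\Qmat\\ \Gmat\end{bmatrix}$ has trivial kernel because $\Gmat^\top\Gmat$ alone is (almost surely / by construction) a positive-definite Gram matrix on the distinct centers — this is essentially the standard fact that a strictly positive-definite kernel evaluated at distinct points yields an SPD Gram matrix, which applies here since the product of the positive-definite feature-inner-product kernel with the positive-definite Bézier kernel is again strictly positive definite on any finite set of distinct points.

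The main obstacle I anticipate is precisely this last subtlety: strict positive definiteness of $\Gmat^\top\Gmat$ is not literally guaranteed by Lemma~\ref{lem:posdef} alone, since that lemma only asserts positive \emph{semi}definiteness of the kernel, and $\Gmat$ is moreover a \emph{rectangular} cross-Gram matrix between input points $\bm x_j^{\text{in}}$ and voxel centers $\bm x_i^{(l)}$ rather than a square Gram matrix on a single point set. So I expect the honest version of this proof either (i) to add a mild genericity or full-rank assumption on the learned features $\phi_\theta^{(l)}$ and on the configuration of input points relative to the voxel hierarchy (enough input points hitting each voxel's support), or (ii) to settle for ``positive semidefinite, hence the normal equations are consistent and a minimizer exists,'' with a Jacobi/CG solver that only needs semidefiniteness plus a compatible right-hand side. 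I would flag which of these the paper actually wants; given the phrasing ``positive definite,'' I would go with route (i) and state the needed non-degeneracy hypothesis explicitly, then the chain ``energy $=0 \Rightarrow f_\theta$ and its gradient vanish on a unisolvent set $\Rightarrow \bm\alpha = \bm 0$'' closes the argument.
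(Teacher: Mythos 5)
Your proposal follows essentially the same route as the paper: symmetry is immediate from the $A^\top A$ form, positive semidefiniteness follows from $\bm\alpha^\top(\Qmat^\top\Qmat+\Gmat^\top\Gmat)\bm\alpha = \|\Qmat\bm\alpha\|_2^2+\|\Gmat\bm\alpha\|_2^2 \geq 0$, and the only substantive issue is strict definiteness. Where you differ is in honesty about that last step. The paper disposes of it in one clause --- ``since $\Qmat$ is constructed as a concatenation of Gram Matrices, it is full rank'' --- which is not a valid deduction: horizontally concatenating blocks only adds columns, and both $\Qmat$ and $\Gmat$ are rectangular cross-evaluation matrices (input points or voxel centers against all voxel centers across levels), so full \emph{column} rank is exactly what needs proving and can fail, e.g.\ when the number of constraints is smaller than the total number of voxels, or when the learned features $\phi_\theta^{(l)}$ degenerate. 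Your diagnosis of this as the real gap, and your two proposed remedies (an explicit non-degeneracy hypothesis on the features and point configuration, or retreating to positive semidefiniteness plus consistency of the normal equations for the CG solver), are more rigorous than what the paper offers; in practice the implementation's Jacobi-preconditioned CG with a compatible right-hand side $\Qmat^\top\bm n$ corresponds to your option (ii). So your proof is correct up to the same unproven full-rank claim the paper itself relies on, and you have correctly identified and flagged that claim rather than asserting it.
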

\begin{proof}
The $n \times n$ matrix $\Qmat^\top \Qmat$ is symmetric and positive definite since $\forall \bm x \neq \bm 0, x^\top \Qmat^\top \Qmat x = \|\Qmat \bm x\|_2^2 \geq 0$. Furthermore since $\Qmat$ is constructed as a concatenation of Gram Matrices, it is full rank and thus $\|\Qmat \bm x\|_2^2 > \bm 0$. The same holds for $\Gmat^\top \Gmat$, and since the sum of positive definite matrices is positive definite, $\Qmat^\top \Qmat + \Gmat^\top \Gmat$ is positive definite.
\end{proof}


\begin{figure}[t]
    \centering
    \includegraphics[width=\linewidth]{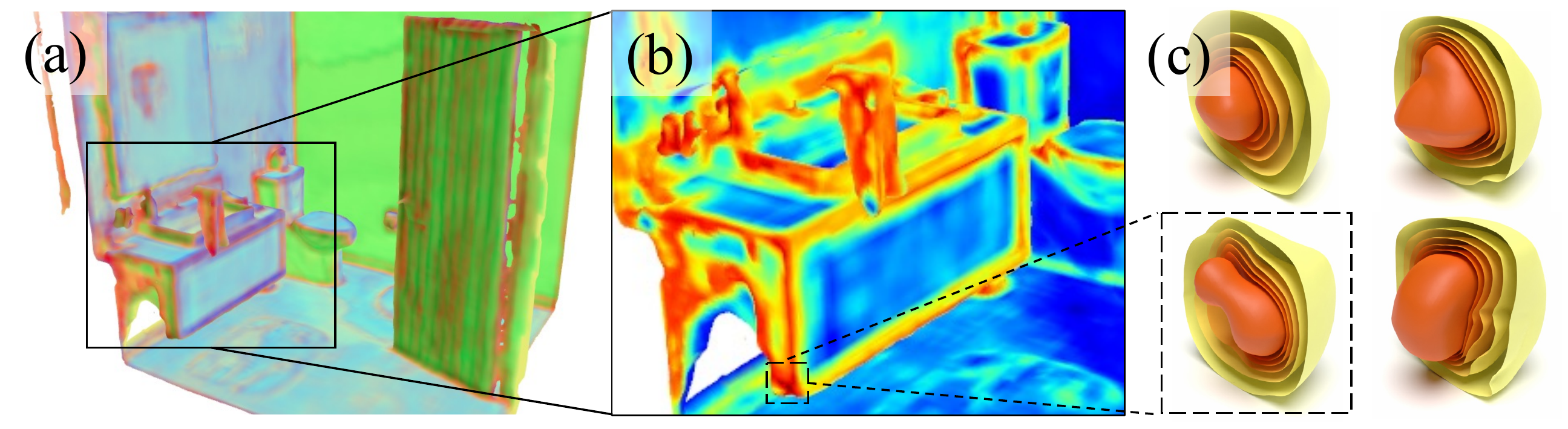}
    \caption{\textbf{Kernel visualization.} (a) PCA of the kernel features $\phi_\theta^{(l)}$. (b) Heatmap of kernel similarities \wrt one selected voxel in the dashed box. (c) Level sets of the kernel basis functions $K^{(l)}_\theta(\bm{x},\bm{x}_j^{(l)})$.}
    \label{fig:kernel}
\end{figure}

\parahead{Neural Kernel Visualization}
The above learned kernel formulation makes the notion of inductive bias precise. 
Solutions to the ridge regression minimize the learned RKHS norm $\|\cdot\|_\mathcal{H}$ which controls the behavior of the fitted surfaces away from the input points. 
This norm tightly controls the inductive bias of solutions and is meta-learned to perform well on the reconstruction task. 
In \cref{fig:kernel}, we perform PCA over the kernel features on the reconstructed surface and plot exemplar kernel basis functions $K^{(l)}_\theta(\bm{x},\bm{x}_j^{(l)})$ in 3D. Note how similar geometries share similar learned kernels shown in the heatmap. 

\subsection{Additional Losses}

\parahead{Structure Loss}
We compute a structure prediction loss on the predicted voxel hierarchy, written as:
\begin{equation}
    \mathcal{L}_\text{struct} = \sum_{i,l} \text{Cross-Entropy} \left(\bm c_i^{(l)}, (\bm c_i^{(l)})_\text{GT}\right),\nonumber
\end{equation}
where $\bm c_i^{(l)} \in \RR^3$ refers to the output of the structure prediction branch, and $(\bm c_i^{(l)})_\text{GT}$ is its ground-truth counterpart.
To compute the ground-truth hierarchy, we apply the approach in OctField~\cite{tang2021octfield} to $\Xdense$ and $\Ndense$.
Specifically, we start by building a dense hierarchy of the coarest level of voxels.
Then we recursively subdivide a voxel (suppose the volume it takes is $R_i^{(l)}$) into 8 voxels when the following criterion is satisfied:
\begin{equation}
    \mathbb{E}_{(\x, \bm n) \in R_i^{(l)}} \left( \text{std.}(\bm n_x) + \text{std.}(\bm n_y) + \text{std.}(\bm n_z) \right) > 0.1,\nonumber
\end{equation}
where $\x \in \Xdense$ and $\text{std.}(\cdot)$ stands for standard deviation.
Notably, we introduce another parameter $L'$ for the hierarchy denoting the maximum adaptive depth.
We run a second pass through the hierarchy to make sure that none of the voxels with depth $l>L'$ is a leaf node.

\parahead{Masking Loss}
To supervise $\varphi(\x)$ for trimming spurious geometry from shapes with open surfaces, we apply a binary-cross-entropy loss, ensuring that points which are within the distance $W$ from any point in $\Xdense$ are $1$ and $0$ otherwise.

\subsection{Out-of-Core Reconstruction}
When \MethodName{} is applied to very large scenes with millions of points, the $\Gmat$ and $\Qmat$ matrices become inevitably huge and could hardly fit into the GPU memory of a single video card.
Hence, we opt to divide the large scenes into several chunks with overlap, run our full pipeline on each of the chunks and then merge the reconstructions in its implicit form.
Due to the energy minimization nature of our algorithm, the overlapping regions of different chunks share the same constraints and are hence highly coherent.
For outdoor scenes with open surfaces, we merge the implicit functions in a way that also considers the output of the masking module, as illustrated in \cref{fig:supp:merge}.

\begin{figure}[t]
    \centering
    \includegraphics[width=\linewidth]{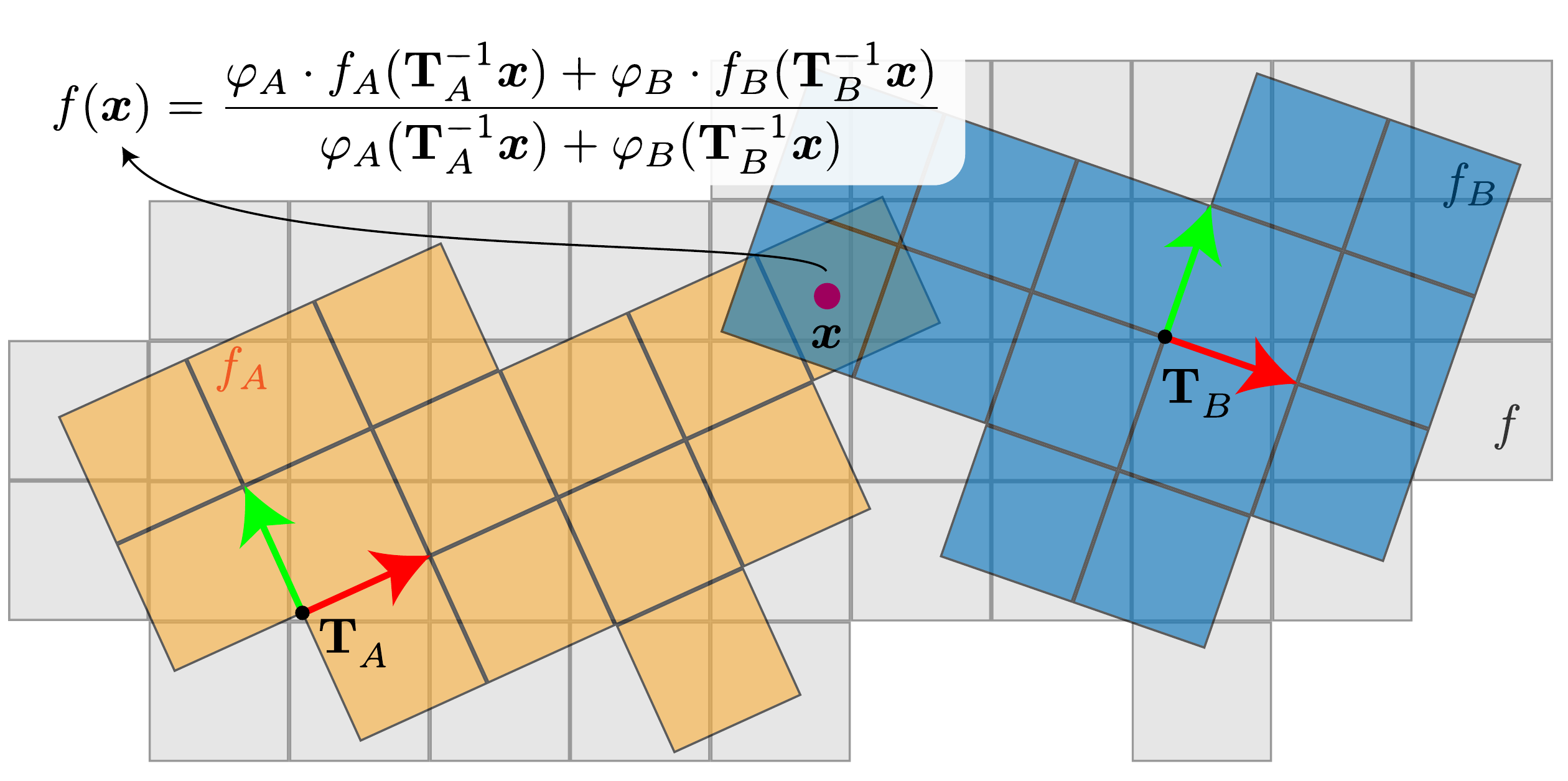}
    \caption{\textbf{Merging multiple reconstructions.} We demonstrate the merging operation with two chunks $A$ and $B$. The final implicit value for point $\x$ is defined as the average of the two chunks, weighted by their predicted masking values.}
    \label{fig:supp:merge}
\end{figure}

Mathematically, the final merged implicit field $f$ and the masking function $\varphi$ are defined as:
\begin{equation}
\begin{aligned}
    f(\x) &= \sum_k \varphi_k(\trans_k^{-1}\x) f_k(\trans_k^{-1}\x) / \sum_k \varphi_k(\trans_k^{-1}\x), \\
    \varphi(\x) &= \max_k \varphi_k(\trans_k^{-1}\x),
\end{aligned}\nonumber
\end{equation}
where the index $k$ refers to the chunks whose regions cover $\x$, and $\trans_k \in \mathbb{SE}(3)$ is the transformation of the chunk.
To extract the triangular mesh, we build a new hierarchy encapsulating all the hierarchies of the chunks and run Dual Marching Cubes~\cite{schaefer2004dual} over it.

\section{Experimental Settings}
\label{supp:sec:setting}

\subsection{Hyperparameters}

\parahead{Shared Parameters}
To train the model we adopt a batch size of 4 using the technique of gradient accumulation.
We use the Adam optimizer with an initial learning rate of $10^{-4}$, and decay it to 70\% every 50K iterations. 
The gradients are clipped with a norm threshold of 0.5 to protect the model under spurious gradients.
To train the structure prediction branch along with other data branches, we use a warm-up strategy for the structures, where we start with the ground-truth structure and gradually increase the probability that the ground-truth structure is replaced with the predicted structure. 
We find this stabilizes training.
We use the Jacobi-preconditioned Conjugate Gradient solver for both the forward and the backward passes of the linear solve, and set the convergence tolerance to $10^{-5}$.
The solver typically converges within several hundreds of iterations.

\parahead{Dataset-Specific Parameters}
Due to the different scales and attributes of the datasets we tested on, we empirically choose different parameters for them, as listed in \cref{tab:supp:parameter}.
Notably, in the \emph{kitchen-sink-model} (\ks{}), we normalize all the training data to align with the scale of CARLA dataset during both training and testing.
After normalization, the average number of points per voxel is around 5.

\begin{table}[!t]
\setlength{\tabcolsep}{5.pt}
\centering
\small
\caption{\textbf{Dataset-specific hyperparameters.}}
\label{tab:supp:parameter}
\vspace{-1em}
\begin{tabular}{@{}lcccc}
\toprule
                    & ShapeNet & ABC & Room & CARLA \\ \midrule
Scale               & $1.1^3$  & $\sim1^3$ & $1^3$  & $51.2\text{m}^2$  \\
Voxel size $W$      & 0.02     &  0.02   & 0.01  &0.1       \\
Adaptive depth $L'$ & 1        & 2    & 2    & 2      \\
Kernel dim. $d$     & 16       & 4   & 4    &  4    \\ \bottomrule
\end{tabular}
\vspace{-1em}
\end{table}

\subsection{Baselines}

\parahead{SPSR~\cite{kazhdan2013screened}} 
We use the code from \url{https://github.com/mkazhdan/PoissonRecon}, and sets the voxel size (\texttt{width} parameter) to be the same as ours during comparison.
For trimming we use the density values provided along with the mesh. We remove vertices with densities lower than a given quantile which we determine empirically for each dataset.

\parahead{POCO~\cite{boulch2022poco}}
We use the official implementation from \url{https://github.com/valeoai/POCO}.
We tried our best to train a model with normal input (using the \texttt{normals} switch) but could not get a decently-performing model. \ie, the quality of the generated meshes are consistently much worse than the version without normal input.
Hence, for datasets where they do not provide a pretrained model, we train from scratch using our data without normals.

\parahead{NGSolver~\cite{huang2022neuralgalerkin}}%
We use the official implementation from \url{https://github.com/huangjh-pub/neural-galerkin}, taking the default configurations provided by the repository.

\parahead{SAP~\cite{peng2021shape} and ConvONet~\cite{peng2020convoccnet}}
We use the implementation from \url{https://github.com/autonomousvision/convolutional_occupancy_networks} and \url{https://github.com/autonomousvision/shape_as_points} respectively and take the official configurations whenever possible.
For comparisons with normal input, we modify their point encoder to accept an additional input of normal information through concatenation, similar to ours as in \cref{supp:subsec:network}.

\parahead{NKF~\cite{williams2022neural}}
We ask the original authors of the paper who kindly run all the comparisons for us because their code is not yet publicly available.

\parahead{IMLSNet~\cite{liu2021deep}}
The implementation is taken from \url{https://github.com/Andy97/DeepMLS} and we use the default configurations to re-train their network for settings where pretrained models are not available.

\parahead{TSDF-Fusion~\cite{vizzo2022sensors}}
We choose to use the implementation from \url{https://github.com/PRBonn/vdbfusion} among all others due to its efficiency.
As the algorithm requires sensor rays instead of points and normals, we generate pseudo-rays emitting from $\x + \epsilon\bm{n}$ and stopping at $\x$ as the input to their algorithm.

\parahead{LIG~\cite{huang2021di}}
We use the implementation from \url{https://github.com/huangjh-pub/di-fusion} with a pretrained local implicit auto-encoder that takes normal input.
Nearby local grids are blended with trilinear weights to ensure a smooth reconstruction.

\subsection{Metrics}

To compute the metrics, we densely sample points and the corresponding normals from both the ground-truth mesh (denoted as $\Xgt$ and $\Ngt$) and the predicted mesh (denoted as $\Xpd$ and $\Npd$).

\parahead{Chamfer Distance $d_C$}
The Chamfer distance is computed using:
\begin{equation}
\begin{aligned}
    d_C &= \frac{1}{2} (\text{Comp.} + \text{Acc.}), \\
    \text{Comp.} &= \frac{1}{|\Xgt|} \sum_{\xgt\in\Xgt} \min_{\xpd \in \Xpd} \|\xgt-\xpd\|, \\
    \text{Acc.} &= \frac{1}{|\Xpd|} \sum_{\xpd\in\Xpd} \min_{\xgt \in \Xgt} \|\xpd-\xgt\|.
\end{aligned}\nonumber
\end{equation}
Note that this is consistent with the one used in, \eg, ConvONet~\cite{peng2020convoccnet} but different with the one used in POCO\footnote{\url{https://github.com/ErlerPhilipp/points2surf/issues/20}}, hence the difference in the results.

\parahead{Normal Consistency}
The normal consistency score is defined as follows:
\begin{equation}
    \frac{1}{2} \left(\sum_{\xgt\in\Xgt} |\langle\ngt,\bm{n}_{\text{NN}(\xgt, \Xpd)}\rangle| + \sum_{\xpd\in\Xpd} |\langle\npd,\bm{n}_{\text{NN}(\x, \Xgt)}\rangle|\right).\nonumber
\end{equation}

\parahead{F-Score}
The F-Score is defined as follows:
\begin{equation}
    \frac{2 \cdot \text{Precision} \cdot \text{Recall}}{\text{Precision} + \text{Recall}},\nonumber
\end{equation}
where
\begin{equation}
\begin{aligned}
    \text{Precision} & = \frac{|\{ \xpd \in \Xpd \,|\, \min_{\xgt\in\Xgt} \|\xgt-\xpd\| < \xi \}|}{|\Xpd|} , \\
    \text{Recall} & = \frac{|\{ \xgt \in \Xgt \,|\, \min_{\xpd\in\Xpd} \|\xpd-\xgt\| < \xi \}|}{|\Xgt|}.
\end{aligned}\nonumber
\end{equation}
We use $\xi=0.01$ for object-level and indoor datasets, and $\xi=0.1$ for CARLA dataset.

\subsection{Details on CARLA Dataset}

We report detailed specifications of our generated CARLA dataset in \cref{tab:supp:carla}.
To obtain the input and ground-truth training pairs, we use a simulated LiDAR sensor that is mounted 1.8m above the ground, with a vertical field-of-view ranging from $-15^\circ$ to $15^\circ$ and an atmosphere attenuation rate of $4 \times 10^{-3}$.

\begin{table}[!t]
\centering
\small
\caption{\textbf{Dataset specifications for CARLA.}}
\label{tab:supp:carla}
\vspace{-1em}
\begin{tabular}{@{}lcccc}
\toprule
                & Town1 & Town2 & Town3 & Town10 \\ \midrule
Subset          & Original  & Original      & Novel      & Original       \\
\# Drives        & 3    &  3     &   3    &   4     \\
\# Chunks        & 93      & 93  & 90  &  124   \\
\# Avg. Points   &  510K     & 649K   & 546K & 388K    \\ \bottomrule
\end{tabular}
\vspace{-1em}
\end{table}

\section{Extensions}
\label{supp:sec:ext}

\begin{figure*}
    \centering
    \includegraphics[width=\linewidth]{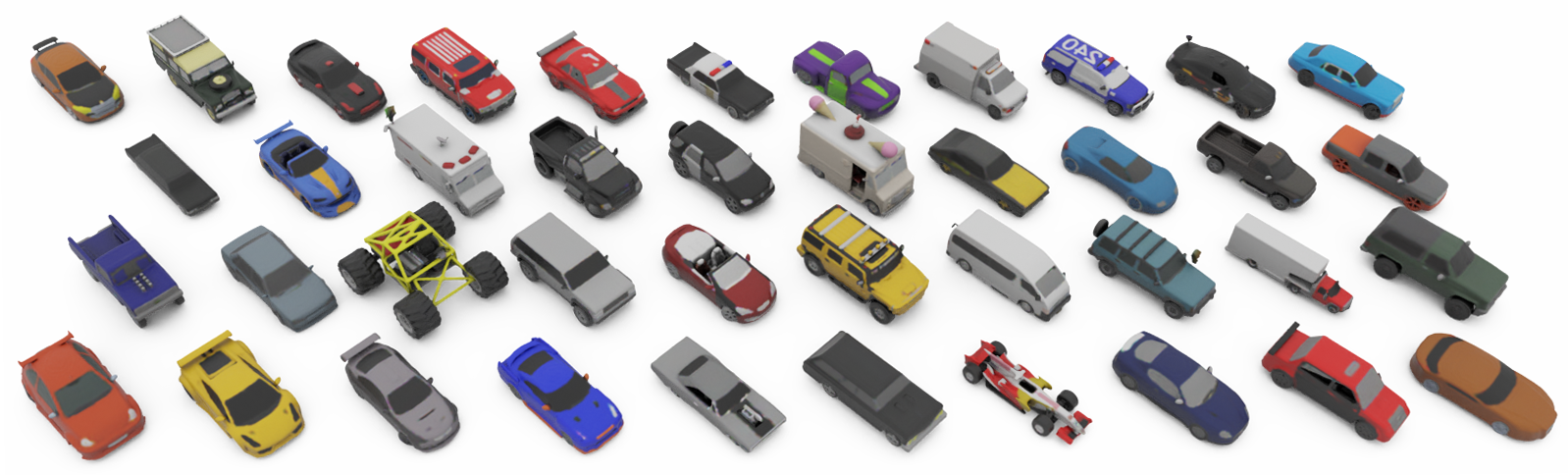}
    \caption{\textbf{Texture reconstruction.} Our sparse neural kernel hierarchy is expressive enough to faithfully represent the textures on the shape. We use 10K colored points sampled from ShapeNet~\cite{chang2015shapenet} cars as input and iterate 800 times for each shape.}
    \label{fig:supp:ext-color}
\end{figure*}

\subsection{Texture Reconstruction}

Our sparse neural kernel field representation defined over the hierarchy can be easily extended to represent other scene attributes, such as textures.
The textures recovered can be defined continuously in the region covered by the hierarchy, similar to TextureField~\cite{oechsle2019texture}.
Specifically, we define 3 additional implicit functions $g_\phi^R, g_\phi^G, g_\phi^B$ for the red, green, and blue channel of the texture field as:
\begin{equation}
    g_\phi^{[R|G|B]} (\x) = \sum_{i,l} \gamma_i^{(l),[R|G|B]} K_\phi^{(l),[R|G|B]} (\x, \x_i^{(l)}),\nonumber
\end{equation}
and the coefficients $\gamma_i^{(l)}$ can be obtained by solving the following linear system (using similar derivations as in \cref{supp:subsec:kernel}, omitting $R,G,B$ superscripts for brevity):
\begin{equation}
    \Gmat^\top_c \Gmat_c \bm{\gamma} = \Gmat^\top_c \bm t,
\end{equation}
where $\Gmat_c$ is the Gram matrix for the kernel $K_\phi$ and $\bm{t}$ is the input color vector.

To demonstrate our ability of texture reconstruction, we add 3 additional branches to our network backbone that predict kernel fields $K_\phi$ for the red, green and blue channel respectively.
We overfit some examplar cars from ShapeNet~\cite{chang2015shapenet} dataset with 10K colored input points and the results are shown in \cref{fig:supp:ext-color}.
We could accurately recover the textures along with the shape, showing a strong representation power for signals other than geomtry.

\begin{figure}
    \centering
    \includegraphics[width=\linewidth]{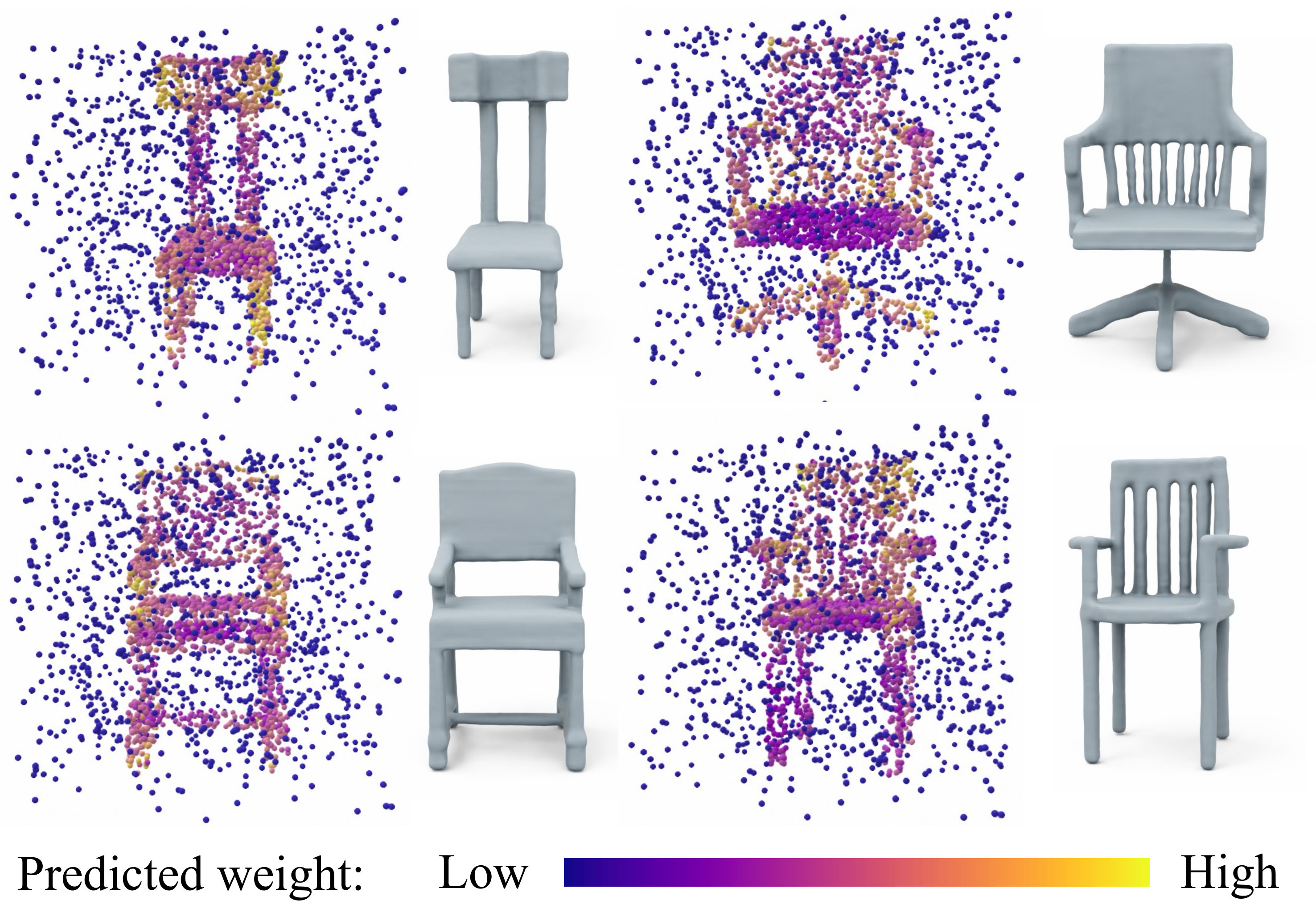}
    \caption{\textbf{Outlier detection and removal.} Given input points with extreme outliers (left), the model learns to automatically down-weigh irrelevant points and reconstructs good geometry (right).}
    \label{fig:supp:ext-weight}
\end{figure}

\subsection{Outlier Detection}

For input point clouds that are corrupted with outliers, the structure prediction branch can already prune many of them by not generating supporting voxels for regions that are faraway from the real surfaces.
However, for outliers that are close to the surface, they act as false data constraints which should not be included in our linear system.
To this end, we introduce a weighted version of our energy formulation \eqref{eq:lin_loss_supp} as follows:
\begin{equation}
\begin{aligned}
    \bm \alpha^* = \argmin_{\alpha_i^{(l)}} & \sum_{l=1}^{L'} \sum_{i = 1}^{n^{(l)}} \|\nabla_{\bm x} f_\theta(\bm x_i^{(l)}) - \bm n_i^{(l)}\|_2^2 + \\&\sum_{j = 1}^{n_\text{in}} \textcolor{red}{w_j^\text{in}} |f_\theta(\bm x^\text{in}_j)|^2,
\end{aligned}\nonumber
\end{equation}
where the highlighted variable $w_j^\text{in} \in [0,1]$ is defined for each input point and predicted by an MLP (ended with Sigmoid) that relies on the trilinearly-interpolated backbone features of our U-Net.

The change in the energy formulation only requires a minor change in the linear system as:
\begin{equation}
    (\Qmat^\top \Qmat + \Gmat^\top \textcolor{red}{\mathbf{W}} \Gmat) \bm \alpha = \Qmat^\top \bm n,\nonumber
\end{equation}
where $\mathbf{W} = \text{diag}(w_j^\text{in})$, and the gradients could also be propagated to the weights during training.

The model could then be trained without adding any extra supervision, and we show in \cref{fig:supp:ext-weight} that the model could automatically learn the weights of the points in a meaningful way, where the model is trained and tested on 3K-point input with 50\% of outliers.

\section{More Visualizations}
\label{supp:sec:vis}

We provide more visualizations in \cref{fig:supp:abc}, \cref{fig:supp:snetw}, \cref{fig:supp:snetwo}, \cref{fig:supp:room}, \cref{fig:supp:carla1} and \cref{fig:supp:carla2}.

\begin{figure*}
    \centering
    \includegraphics[width=0.9\linewidth]{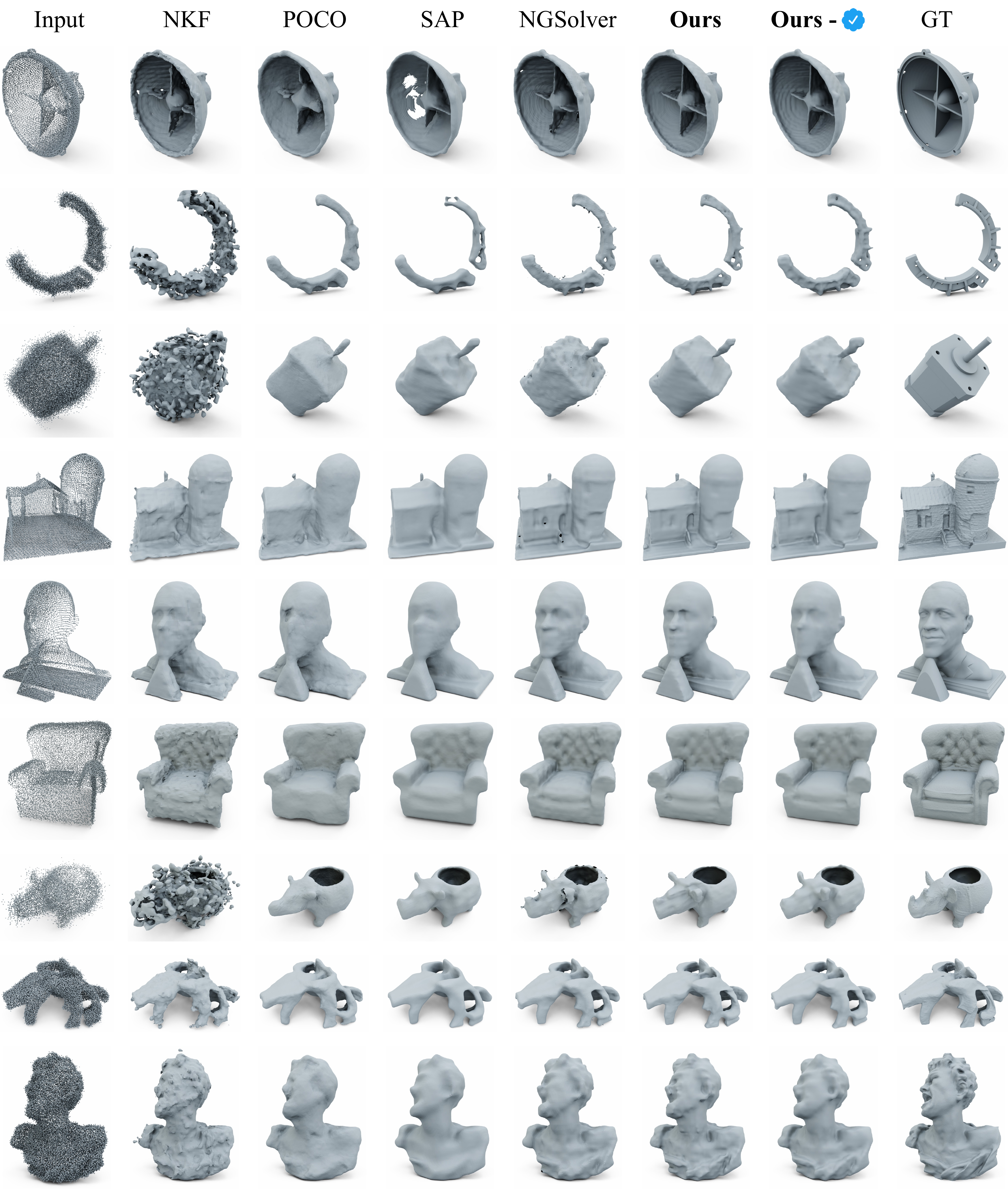}
    \caption{\textbf{More results on ABC/Thingi10K datasets.} Best viewed with 2$\times$ zoom-in.}
    \label{fig:supp:abc}
\end{figure*}

\begin{figure*}
    \centering
    \includegraphics[width=0.75\linewidth]{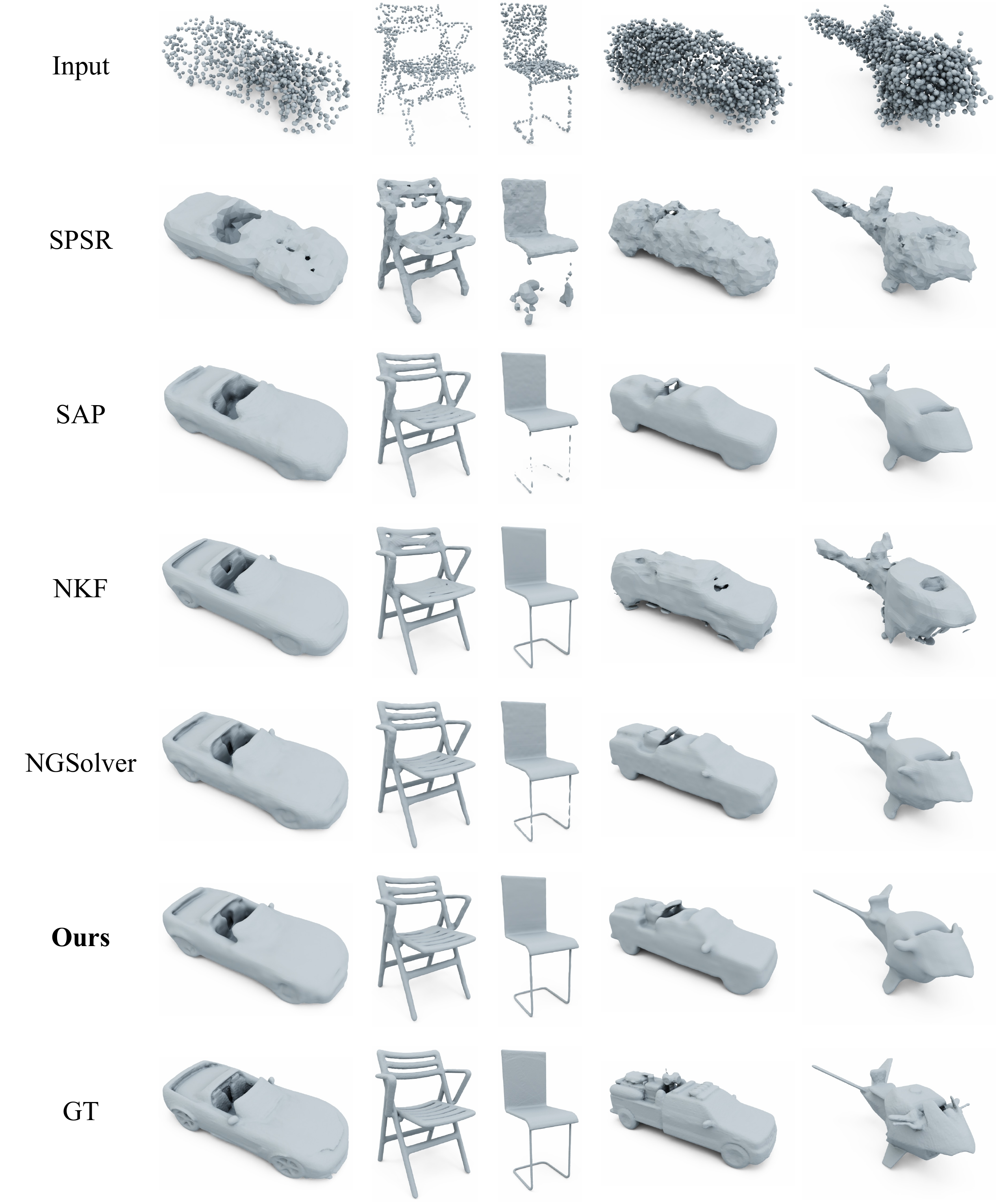}
    \caption{\textbf{More results on ShapeNet datasets (with normal).} Best viewed with 2$\times$ zoom-in.}
    \label{fig:supp:snetw}
\end{figure*}

\begin{figure*}
    \centering
    \includegraphics[width=0.9\linewidth]{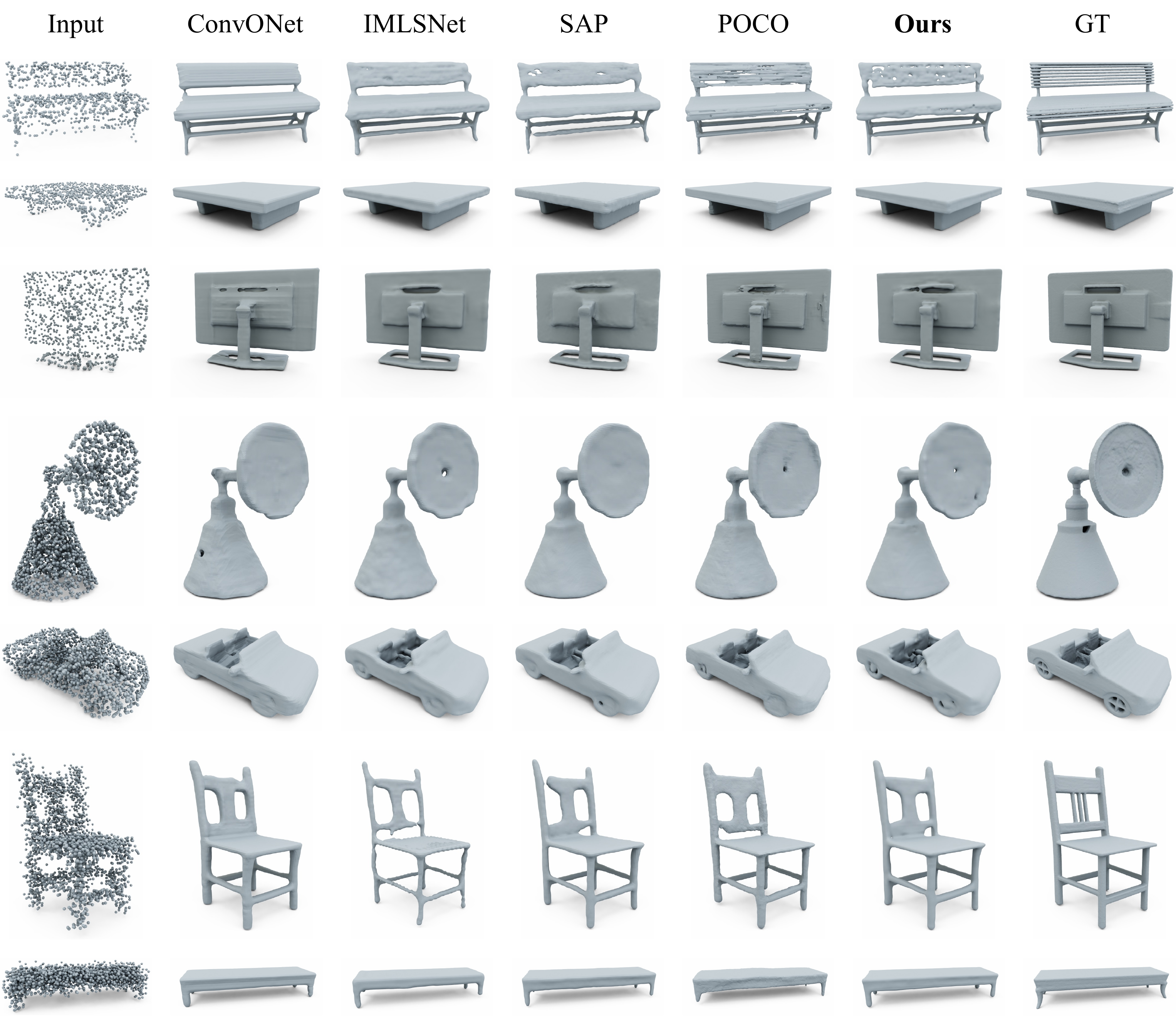}
    \caption{\textbf{More results on ShapeNet datasets (without normal).} Best viewed with 2$\times$ zoom-in.}
    \label{fig:supp:snetwo}
\end{figure*}

\begin{figure*}
    \centering
    \includegraphics[width=\linewidth]{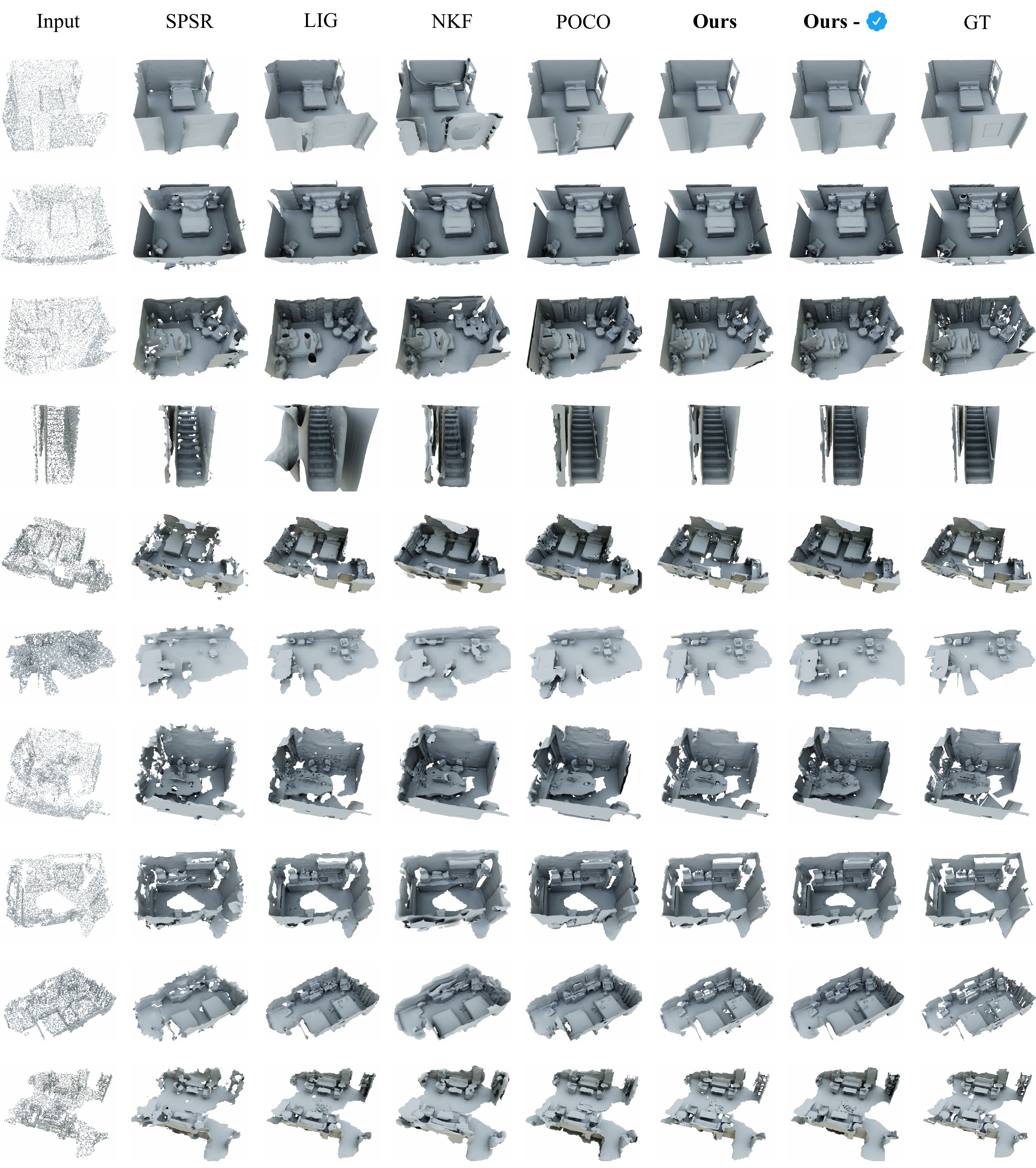}
    \caption{\textbf{More results on Matterport/ScanNet datasets.} Best viewed with 2$\times$ zoom-in.}
    \label{fig:supp:room}
\end{figure*}

\begin{figure*}
    \centering
    \includegraphics[width=\linewidth,page=1]{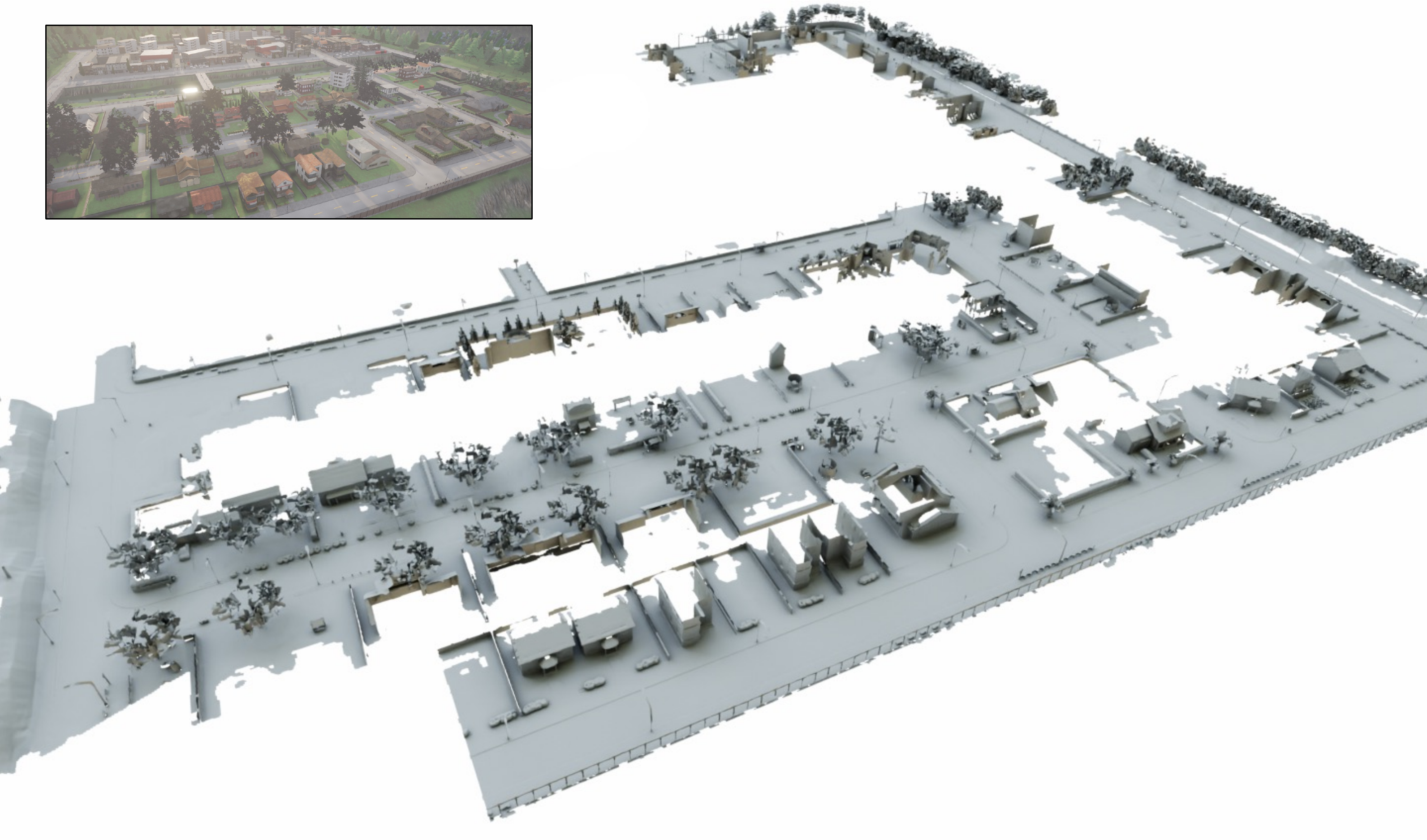}\\
    \includegraphics[width=\linewidth,page=2]{figures/img/supp-carla.pdf}
    \caption{\textbf{Our reconstruction of the CARLA dataset.} The inset shows RGB rendering of the scene within the simulator.}
    \label{fig:supp:carla1}
\end{figure*}

\begin{figure*}
    \centering
    \includegraphics[width=\linewidth,page=3]{figures/img/supp-carla.pdf}\\
    \includegraphics[width=\linewidth,page=4]{figures/img/supp-carla.pdf}
    \caption{\textbf{Our reconstruction of the CARLA dataset.} The inset shows RGB rendering of the scene within the simulator.}
    \label{fig:supp:carla2}
\end{figure*}
\end{document}